\newtheorem{definition}{Definition}
\newtheorem{theorem}{Theorem}
\newtheorem{remark}{Remark}
\newtheorem{assumption}{Assumption}
\definecolor{neg}{RGB}{248, 206, 204}
\definecolor{pos}{RGB}{213, 232, 212}
\begin{document}

\title{Deep Safe Multi-Task Learning}

\author{Zhixiong Yue$^*$, Feiyang Ye$^*$, Yu Zhang, Christy Liang, and Ivor W. Tsang
\IEEEcompsocitemizethanks{\IEEEcompsocthanksitem Z. Yue and F. Ye are with Department of Computer Science and Engineering, Southern University of Science and Technology and School of Computer Science, University of Technology Sydney. Y. Zhang is with Department of Computer Science and Engineering, Southern University of Science and Technology and Peng Cheng Laboratory. C. Liang is with School of Computer Science, University of Technology Sydney. Ivor W. Tsang is with Australian Artificial Intelligence Institute, University of Technology Sydney. 
\IEEEcompsocthanksitem $^*$ Equal contribution.
\IEEEcompsocthanksitem Corresponding author: Yu Zhang (yu.zhang.ust@gmail.com).}
}
\markboth{}%
{}

\IEEEtitleabstractindextext{%
\begin{abstract}
In recent years, Multi-Task Learning (MTL) has attracted much attention due to its good performance in many applications. However, many existing MTL models cannot guarantee that their performance is no worse than their single-task counterparts on each task. Though some works have empirically observed this phenomenon, little work aims to handle the resulting problem. In this paper, we formally define this phenomenon as \textit{negative sharing} and define \textit{safe multi-task learning} where no \textit{negative sharing} occurs. To achieve \textit{safe multi-task learning}, we propose a Deep Safe Multi-Task Learning (DSMTL) model with two learning strategies: individual learning and joint learning. We theoretically study the safeness of both learning strategies in the DSMTL model to show that the proposed methods can achieve some versions of \textit{safe multi-task learning}. Moreover, to improve the scalability of the DSMTL model, we propose an extension, which automatically learns a compact architecture and empirically achieves \textit{safe multi-task learning}.
Extensive experiments on benchmark datasets verify the safeness of the proposed methods.
\end{abstract}

\begin{IEEEkeywords}
Multi-task learning, Negative sharing, Safe multi-task learning
\end{IEEEkeywords}}

\maketitle

\IEEEpeerreviewmaketitle

\IEEEraisesectionheading{\section{Introduction}\label{sec:introduction}}

\IEEEPARstart{M}{ulti-Task} Learning (MTL) \cite{caruana97,zy21}, which aims to improve the generalization performance of multiple learning tasks by sharing knowledge among those tasks, has attracted much attention in recent years. Compared with single-task learning that learns each task independently, MTL not only improves the performance for some or all the tasks but also reduces the training and inference time. Therefore, MTL has been widely used in many Computer Vision (CV) applications, such as human action recognition \cite{liu2016hierarchical}, face attribute estimation \cite{han2017heterogeneous}, age estimation \cite{zhang2010multi}, and dense prediction tasks \cite{vandenhende2021multi}.

Although MTL has demonstrated its usefulness in many applications, MTL cannot guarantee to improve the performance of all the tasks compared with single-task learning. Specifically, as empirically observed in \cite{lee2016asymmetric, guo2020learning, SunPFS20,standley2020tasks, tang2020progressive}, when learning multiple tasks together,
many existing MTL models can achieve better performance on some tasks than their single-task counterparts but underperform on the other tasks. Such phenomenon is called the \textit{negative sharing} phenomenon in this paper, which is similar to the `negative transfer' phenomenon \cite{wang2019characterizing} in transfer learning \cite{yang2020transfer} but with some differences
as discussed later. One reason for the occurrence of \textit{negative sharing} is that there are partially related or even unrelated tasks among tasks under the investigation, making jointly learning those tasks impair the performance of some tasks.

To the best of our knowledge, there is little work to study the \textit{negative sharing} problem for MTL. To fill this gap, in this paper, we firstly give a formal definition for \textit{negative sharing} that could occur in MTL. Then we formally define an ideal and also basic situation for MTL called \textit{safe multi-task learning}, where the generalization performance of an MTL model is no worse than its single-task counterpart on each task. That is, there is no \textit{negative sharing} occurred. According to the definition of MTL \cite{caruana97,zy21}, we can see that every MTL model is required to achieve \emph{safe multi-task learning}. Otherwise, single-task learning is more preferred than MTL, since an unsafe MTL model may bring the risk of worsening the generalization performance of some or even all the tasks. As true data distributions in multiple tasks are usually unknown so that \textit{safe multi-task learning} is hardly to measure, we formally define \textit{empirically safe multi-task learning} and \textit{probably safe multi-task learning}, which are measurable.

To achieve \textit{empirically/probably safe multi-task learning}, we propose a Deep Safe Multi-Task Learning (DSMTL) model whose architecture consists of a public encoder shared by all the tasks and a private encoder for each task. The public encoder and a private encoder of a task are combined via a gating mechanism to form the entire encoder for that task. To train the DSMTL model, we propose two learning strategies: individual learning (denoted by DSMTL-IL) and joint learning (denoted by DSMTL-JL), which learn model parameters separately and jointly, respectively. For those two strategies, we provide theoretical analyses to show that they can achieve some versions of both \textit{empirically safe multi-task learning} and \textit{probably safe multi-task learning}.

To improve the scalability of the DSMTL model with respect to the number of tasks, we propose an extension called DSMTL with Architecture Learning (DSMTL-AL),
which leverages neural architecture search to learn a more compact architecture with fine-grained modular splitting. Specifically, we allow the DSMTL-AL model to learn where to switch to the private encoder while forwarding in the public encoder. In this way, the DSMTL-AL model can save the first few modules in the private encoders and hence improve the scalability.

Extensive experiments on benchmark datasets, including CityScapes, NYUv2, PASCAL-Context, and Taskonomy,
demonstrate the effectiveness of the proposed DSMTL-IL, DSMTL-JL, and DSMTL-AL methods.

The main contributions of this paper are summarized as follows.
\begin{itemize}
\item We provide formal definitions for MTL, including \textit{negative sharing}, \textit{safe multi-task learning}, \textit{empirically safe multi-task learning}, and \textit{probably safe multi-task learning}.

\item We propose the simple and effective DSMTL model with two learning strategies, which is guaranteed to achieve some versions of \textit{empirically/probably safe multi-task learning}.

\item We propose the DSMTL-AL method, which is an extension of the DSMTL methods, to learn a compact architecture with good scalability.

\item Extensive experiments demonstrate that empirically the proposed methods can achieve \emph{safe multi-task learning} and that they outperform state-of-the-art baseline models.
\end{itemize}

\section{Related Work}

MTL has been extensively studied in recent years \cite{evgeniou2004regularized, zhang2010convex, kumar2012learning, zhang2021multi, guo2021deep}. How to design a good network architecture for MTL is an important issue. The most widely used architecture is the multi-head hard sharing architecture \cite{caruana97, lmzcl15, ruder2019latent}, which shares the first several layers among all the tasks and allows the subsequent layers to be specific to different tasks. Then, to better handle task relationships, different MTL architectures have been proposed. For example, \cite{misra2016cross} proposes a cross-stitch network to learn to linearly combine hidden representations of different tasks.
\cite{ma2018modeling} proposes a multi-gate mixture-of-experts model which adopts the mixture-of-experts model by sharing expert submodels across all tasks, while having a gating network trained to optimize each task. \cite{liu2019end} proposes a Multi-Task Attention Network (MTAN), which consists of a shared network and an attention module for each task so that both shared and private feature representations can be learned via the attention mechanism. \cite{gao2019nddr} proposes a Neural Discriminative Dimensionality Reduction (NDDR) layer to enable automatic feature fusing at every layer for different tasks. \cite{SunPFS20} proposes an Adaptive Sharing (AdaShare) method to learn the sharing pattern through a policy that selectively chooses which layers to be executed for each task. \cite{cui2021adaptive} proposes an Adaptive Feature Aggregation (AFA) layer, where a dynamic aggregation mechanism is designed to allow each task to adaptively determine the degree of the knowledge sharing between tasks.
PS-MCNN \cite{cao2018partially} adopts both shared network and task-specific network by performing the concatenation operation after each block to learn shared and task-specific representations.
PLE \cite{tang2020progressive} separates shared components and task-specific components explicitly and adopts a progressive routing mechanism to extract semantic knowledge gradually for MTL.
Some routing-based methods are proposed, including Multi-Agent Reinforcement Learning (MARL) \cite{rosenbaum2018routing} that allows the MTL network to dynamically self-organize its architecture in response to the input, Stochastic Filter Groups (SFG) \cite{bragman2019stochastic} that assigns convolution kernels in each layer to the ``specialist'' or ``generalist'' group, and Task Routing Layer (TRL) \cite{strezoski2019many} that allows for a single model to fit to many tasks within its parameter space with task-specific masking.

Instead of hand-crafting architectures for MTL, there are some works to leverage techniques in Neural Architecture Search (NAS) \cite{DARTS} to automatically search MTL architectures with good performance.
For example, \cite{lu2017fully} dynamically widens a multi-layer network to create a tree-like deep architecture, where similar tasks reside in the same branch.
\cite{EASMTL} proposes an evolutionary architecture search algorithm to search blueprints and modules that are assembled into an MTL network.
\cite{MTL-NAS} searches inter-task layers for better feature fusion across tasks.
\cite{LTB} proposes a differentiable architecture search algorithm to learn branching blocks to construct a tree-structured neural network for MTL.
\cite{BMTAS} automatically determines the branching architecture for the encoder in a multi-task neural network under resource constraints.
\cite{guo2020learning} aims to learn where to share or branch within a network for multiple tasks.
\cite{sun2021task} designs a task switching network that can learn to switch between tasks with a constant number of parameters which is independent of the number of tasks.
All the aforementioned works do not study how to achieve \textit{safe multi-task learning}, which is the focus of this paper.

The safeness of machine learning methods has drawn attention in recent years \cite{li2014towards,li2019towards,guo2020safe,tao2018reliable,tang2022deep,tang2022deepsafe}.
\cite{li2014towards} proposes a safe semi-supervised support vector machine that performs no worse than the supervised counterpart, leading to the safeness in the use of unlabeled data.
\cite{li2019towards} addresses the safe weakly supervised learning problem by integrating multiple weakly supervised learners, which is guaranteed to derive a safe prediction under a mild condition.
\cite{guo2020safe} proposes a safe deep semi-supervised learning method to alleviate the harm caused by class distribution mismatch.
Moreover, there are some works \cite{tao2018reliable,tang2022deep,tang2022deepsafe} to address the safeness in multi-view clustering. \cite{tao2018reliable} proposes reliable multi-view clustering, which empirically performs no worse than its single-view counterpart and proves that its performance will not significantly degrade under some assumptions. \cite{tang2022deep} proposes deep safe multi-view clustering to reduce the risk of performance degradation caused by view increasing and hence to guarantee to achieve the safeness in multi-view clustering. \cite{tang2022deepsafe} proposes a bi-level optimization framework to achieve safe incomplete multi-view clustering.

Different from the aforementioned works that address the safeness in semi-supervised learning, weakly supervised learning, and multi-view clustering, our work focuses on the safeness in multi-task learning.

\section{Definitions}

In this section, we formally introduce some definitions to measure the safeness in MTL.

We first define the \emph{negative sharing} phenomena.

\begin{definition}[Negative Sharing] \label{definition_negative_sharing}
For an MTL model which is trained on multiple learning tasks jointly, if its generalization performance on some tasks is inferior to the generalization performance of the corresponding single-task counterpart that is trained on each task separately, then \emph{negative sharing} occurs.
\end{definition}

\begin{remark}
\emph{Negative sharing} could occur when some tasks are partially or totally unrelated to other tasks. In this case, manually enforcing all the tasks to have some forms of sharing will impair the performance of some or even all the tasks. In Definition \ref{definition_negative_sharing}, the MTL model and its single-task counterpart usually have similar architectures, since totally different architectures could bring additional confounding factors. Moreover, \emph{negative sharing} is similar to negative transfer \cite{wang2019characterizing} in transfer learning \cite{yang2020transfer}. However, knowledge transfer in transfer learning is directed as it is from a source domain to a target domain, while knowledge sharing in MTL is among all the tasks, making it usually undirected. From this perspective, \emph{negative sharing} is different from negative transfer.
\end{remark}

\begin{definition}[Safe Multi-Task Learning] \label{definition_safe_multi-task_learning}
When \emph{negative sharing} does not occur for an MTL model on a dataset, this MTL model is said to achieve \emph{safe multi-task learning} on this dataset.
\end{definition}


\begin{remark}

\emph{Safe multi-task learning} is an ideal situation for an MTL model to achieve.
However, the generalization performance is hard to evaluate during the learning process, so it is hard to determine whether an MTL model can achieve \emph{safe multi-task learning}.

\end{remark}

As the empirical/training loss is easy to compute during the learning process, we present the following definition based on the empirical loss to measure the empirical safeness of MTL models.

\begin{definition}[Empirically Safe Multi-Task Learning] \label{definition_empirically_safe_multi-task_learning} \leavevmode \\
$(1)$ If the empirical loss of an MTL model on each task is no larger than that of its single-task counterpart, this MTL model is said to achieve \emph{empirically individual safe multi-task learning}.
\leavevmode \\ $(2)$  If the average of empirical losses of an MTL model on all the tasks is no larger than that of its single-task counterpart, this MTL model is said to achieve \emph{empirically average safe multi-task learning}.
\end{definition}

\begin{remark}

In Definition \ref{definition_empirically_safe_multi-task_learning}, we define two versions of \emph{empirically safe multi-task learning}. It is easy to see that an MTL model satisfying \emph{empirically individual safe multi-task learning} can achieve \emph{empirically average safe multi-task learning} but not vice versa, which indicates that \emph{empirically individual safe multi-task learning} is weaker than \emph{empirically individual safe multi-task learning}.
Even though \emph{empirically safe multi-task learning} is easy to measure based on the empirical loss of each task, an MTL model that achieves \emph{empirically safe multi-task learning} cannot have guarantee to achieve \emph{safe multi-task learning}, since there is a gap between the empirical loss and the expected loss that is to measure the generalization performance. Hence, \emph{empirically safe multi-task learning} is a loose version of \emph{safe multi-task learning}.

\end{remark}

With $m$ learning tasks in an MTL problem, $\mathcal{E}_t$ and $\mathcal{E}^{\text{STL}}_t$ denote the expected losses of an MTL model and its single-task counterpart on task $t$, respectively. The corresponding average expected losses are denoted by $\mathcal{E}$ and $\mathcal{E}^{\text{STL}}$, respectively. $n$ denotes the average number of samples in all the tasks.
With the above notations, we can define \emph{probably safe multi-task learning} as follows.

\begin{definition}[Probably Safe Multi-Task Learning] \label{definition_expected_safe_multi-task_learning}\leavevmode\\
$(1)$ For an MTL model trained on $m$ tasks, if for $0< \delta < 1$, there exist $m$ constants $\epsilon_t \ge 0$ such that $\mathcal{E}_t + \epsilon_t \le \mathcal{E}^{\text{STL}}_t +\rho^t_n$ holds with at least probability $1-\delta$ for any $t\in[1,m]$, where $\rho^t_n$ is a function of $n$ satisfying $\lim_{n\to +\infty}\rho^t_n = 0$, then this MTL model is said to achieve \emph{probably individual safe multi-task learning}.\\
$(2)$ If for $0< \delta < 1$, there exists a constant $\epsilon \ge 0$ such that $\mathcal{E} + \epsilon \le \mathcal{E}^{\text{STL}} +\rho_n$ holds with at least probability $1-\delta$, where $\rho_n$ is a function of $n$ satisfying $\lim_{n\to +\infty}\rho_n = 0$, then this MTL model is said to achieve \emph{probably average safe multi-task learning}.
\end{definition}

\begin{remark}

Different from \emph{empirically safe multi-task learning} which is measured based on empirical losses, \emph{probably safe multi-task learning} is based on expected losses, making it a tighter approximation of \emph{safe multi-task learning} than \emph{empirically safe multi-task learning}.
Compared with \emph{safe multi-task learning}, \emph{probably safe multi-task learning} is easy to be measured based on some analysis tool as verified in Section \ref{sec:analysis}.
According to Definition \ref{definition_expected_safe_multi-task_learning}, it is easy to see when $n$ is large enough, \emph{probably individual safe multi-task learning} could become \emph{safe multi-task learning} in a large probability.
Between the two versions of \emph{probably safe multi-task learning}, similar to \emph{empirically safe multi-task learning},  \emph{probably average safe multi-task learning} is weaker.

\end{remark}

As discussed above,  \emph{empirically safe multi-task learning} and \emph{probably safe multi-task learning} are two measures for an MTL model to achieve but few works can guarantee to achieve that. In the next section, we will propose the DSMTL model with such guarantees under mild conditions.




\section{DSMTL}


In this section, we present the proposed DSMTL model. Beside the network architecture, we introduce two strategies to learn model parameters, leading to two variants (i.e., DSMTL-IL and DSMTL-JL).

\begin{figure}[h]
\centering
\includegraphics[width=0.9\linewidth]{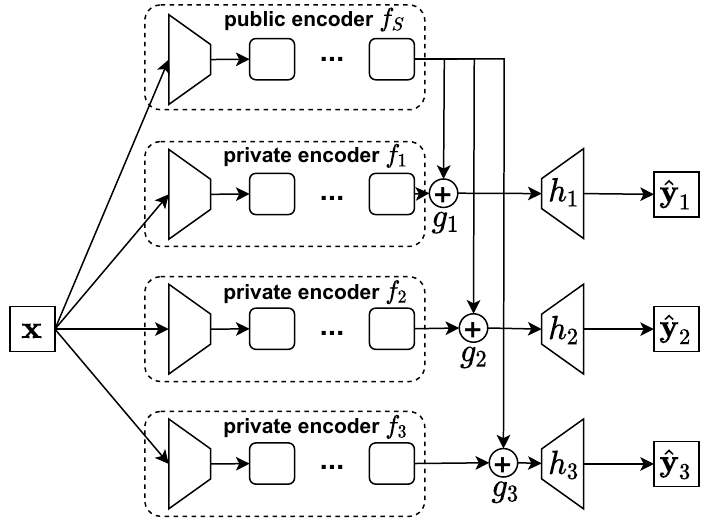}
\caption{An illustration for the architecture of the DSMTL model with three tasks (i.e., $m=3$). Here without loss of generality, we assume different tasks share the input data. For task $t$, an input $\mathbf{x}$  is first fed into both the public encoder $f_S$ and private encoder $f_t$, then it goes through the gate $g_t$ to obtain the combined feature representation, and finally it is through the private decoder $h_t$ to obtain the output $\hat{\mathbf{y}}_t$.}
\label{fig:smtl_model}
\end{figure}

\subsection{The Architecture}
\label{sec:DSMTL_arch}

As shown in Figure \ref{fig:smtl_model}, the architecture of the DSMTL model can be divided into four parts: a public encoder $f_S$ shared by all the tasks, $m$ private encoders $\{f_t\}_{t=1}^m$ for $m$ tasks, $m$ gates $\{g_t\}_{t=1}^m$ for $m$ tasks, and $m$ private decoders $\{h_t\}_{t=1}^m$ for $m$ tasks. For task $t$, its model consists of the public encoder $f_S$, the private encoder $f_t$, the gate $g_t$, and the private decoder $h_t$, where $f_S$ and $f_t$ are combined by $g_t$. Specifically, given a data sample $\mathbf{x}$, the gate $g_t$ in task $t$ receives two inputs: $f_S(\mathbf{x})$ and $f_t(\mathbf{x})$, and outputs $g_t(f_S(\mathbf{x}),f_t(\mathbf{x}))$, which is fed into $h_t$ to obtain the final prediction $h_t(g_t(f_S(\mathbf{x}),f_t(\mathbf{x})))$, which is used to define a loss for $\mathbf{x}$. Here the public encoder $f_S$ and private encoders $\{f_t\}_{t=1}^m$ usually have the same network structure. The private decoders are designed to be task-specific as different tasks may have different types of loss functions.

Here the gate $g_t$ is to determine the contributions of $f_S$ and $f_t$. Ideally, when task $t$ is unrelated to other tasks, $g_t$ should choose $f_t$ only. On another extreme where all the tasks have the same data distribution, all the tasks should use the same model and hence $g_t$ should choose $f_S$ only. In cases between those two extremes, $g_t$ can combine $f_S$ and $f_t$ in proportion. To achieve the aforementioned effects, we use a simple convex combination function for $g_t$ as
\begin{equation}
g_t(f_S(\mathbf{x}),f_t(\mathbf{x}))=\alpha_tf_S(\mathbf{x})+(1-\alpha_t)f_t(\mathbf{x}),\label{eq:gate_fun}
\end{equation}
where $\alpha_t\in \mathcal{M} = [0,1]$ defines the weight of $f_S(\mathbf{x})$ for task $t$ and it is a learnable parameter.
When $\alpha_t$ equals 0, only the private encoder $f_t$ will be used, which corresponds to the unrelated case. When $\alpha_t$ equals 1, only the public encoder $f_S$ will be used, which corresponds to the case that all the tasks follow identical or similar distributions. When $\alpha_t$ is between 0 and 1, $f_S$ and $f_t$ are combined with proportions $\alpha_t$ and $1-\alpha_t$, respectively, and $\alpha_t$ can be adaptively learned to minimize the training loss on task $t$.


The average empirical loss of all the tasks is defined as
\begin{align}
\frac{1}{mn} \sum_{t=1}^m\sum_{i=1}^n \mathcal{L}_t(\mathbf{y}^i_t,h_t( g_t(f_S(\mathbf{x}_t^i),f_t(\mathbf{x}_t^i)))),\label{DSMTL_obj_fun}
\end{align}
where $\mathbf{x}_t^i$ denotes the $i$th data point in task $t$, $\mathbf{y}^i_t$ denotes the label of $\mathbf{x}^i_t$ in task $t$, without loss of generality, different tasks are assumed to have the same number of data samples, which is denoted by $n$, and $\mathcal{L}_t$ denotes the loss function for task $t$ (e.g., the pixel-wise cross-entropy loss for the semantic segmentation task, the $L_1$ loss for the depth estimation task, and the element-wise dot product loss for the surface normal prediction task). The DSMTL model is to minimize the average empirical loss in Eq. \eqref{DSMTL_obj_fun} to learn its model parameters. In the following sections, we provide two learning strategies (e.g., individual learning and joint learning) to learn model parameters, leading to two variants of DSMTL, including DSMTL with Individual Learning (DSMTL-IL) and DSMTL with Joint Learning (DSMTL-JL).

\subsection{DSMTL-IL}

The set of all the parameters in the DSMTL model is denoted by $\Theta$. We divide $\Theta$ into $\Theta_S$ and $\Theta_H$, where  $\Theta_S$ includes all the parameters in $\{f_t\}_{t=1}^m$ and $\{h_t\}_{t=1}^m$, and $\Theta_H$ includes the parameters in $f_S$ and  $\{g_t\}_{t=1}^m$. The individual learning strategy consists of two stages. The first stage is to optimize $\Theta_H$ by fixing each $\alpha_t$ to 0. Then by fixing the learned $\Theta_H$ in the first stage, the second stage is to learn $\Theta_H$. As the single-task model for each task consists of an encoder and a decoder whose structures are identical to $f_t$ and $h_t$, respectively, the first stage is equivalent to learning a single-task model for each task, and after that, the second stage can learn the shared encoder $f_S$ and the gate $\{g_t\}_{t=1}^m$.
Formally, the objective function of the DSMTL-IL model is formulated as
\begin{equation}
\label{eqa: obj_al}
\min_{\Theta_H}\left[\min_{\Theta_S}\frac{1}{mn}\sum_{t=1}^m\sum_{i=1}^n  \mathcal{L}_t(\mathbf{y}^i_t,h_t( g_t(f_S(\mathbf{x}_t^i),f_t(\mathbf{x}_t^i))))\right].
\end{equation}


The DSMTL-IL model can be proved to achieve both \emph{empirically individual safe multi-task learning} and \emph{probably individual safe multi-task learning} as shown in Section \ref{sec:analysis_DSMTL_IL} due to its two-stage optimization process.  


\subsection{DSMTL-JL}

Different from the DSMTL-IL model which optimizes two partitions of model parameters sequentially, the DSMTL-JL model adopts a joint learning strategy to learn $\Theta$ together.
Formally, the objective function of the DSMTL-JL model is formulated as
\begin{equation} \label{eqa: obj}
\min_{\Theta} \frac{1}{mn}\sum_{t=1}^m\sum_{i=1}^n  \mathcal{L}_t(\mathbf{y}^i_t,h_t( g_t(f_S(\mathbf{x}^i_t),f_t(\mathbf{x}^i_t)))).
\end{equation}

The joint learning strategy allows the DSMTL model to learn all the model parameters,
which is more flexible for deep neural networks in an end-to-end learning manner. Different from the DSMTL-IL model, the DSMTL-JL model can achieve both \emph{empirically average safe multi-task learning} and \emph{probably average safe multi-task
learning} as proven in Section \ref{sec:analysis_DSMTL_JL}.

\section{Analyses}\label{sec:analysis}

In this section, we provide theoretical analyses to analyze the safeness of both the DSMTL-IL and DSMTL-JL methods. 

\subsection{Preliminary}

With $m$ tasks, the probability measure for the data distribution in task $t$ is denoted by $\mu_t$ and the data in all the tasks take the form of $(\bar{\mathbf{X}},\bar{\mathbf{Y}}) \sim \prod_{t=1}^m(\mu_t)^n$, where $\mathbf{X}_t = (\mathbf{x}_{t}^{1},\ldots,\mathbf{x}_{t}^{n})$ denotes the data in task $t$, $\bar{\mathbf{X}} =(\mathbf{X}_1,\ldots,\mathbf{X}_m)$,  and $\bar{\mathbf{Y}}$ denotes labels for $\bar{\mathbf{X}}$.
Here we consider the encoders $f_1,\ldots,f_m,f_S: \mathcal{X}\to \mathbb{R}^{q}$ as mapping functions chosen from a hypothesis class $\mathcal{F}$ and the decoders $h_1,\ldots,h_m$
as mapping functions chosen from a hypothesis class $\mathcal{H}$.
To facilitate the analysis, we introduce following assumption.
\begin{assumption}\label{assumption:1}
Assume that $(i)$ $\mathcal{L}_t(\cdot,\cdot)\in [0,1]$ for $t=1,\ldots,m$ is $1$-Lipschitz w.r.t the second argument; $(ii)$ The hypothesis class $\mathcal{F}$ is uniformly bounded; $(iii)$ The functions in hypothesis class $\mathcal{H}$ are Lipschitz continuous; $(iv)$ $0\in \mathcal{F}$ and $h(0)= 0$ holds for all the functions $h$ in $\mathcal{H}$.\footnote{The last assumption in Assumption \ref{assumption:1} is not essential but it can help give simpler theoretical results as verified by the proofs in the appendix.}
\end{assumption}


To analyze the safeness of the DSMTL variants, we compare with the corresponding Single-Task Learning (STL) model, which consists of an encoder and a decoder with identical structures to $f_t$ and $h_t$, respectively, for task $t$. We also compare with the widely-used Hard Parameter Sharing (HPS) model, which consists of a shared encoder and task-specific decoders with the network structures identical to $f_S$ and $\{h_t\}_{t=1}^m$, respectively.
Then the empirical loss of the STL model on task $t$ is formulated as $L^{\text{STL}}_t=\frac{1}{n} \sum_{i=1}^n \mathcal{L}_t(\mathbf{y}^i_t,h^{\text{STL}}_t(f_t^{\text{STL}}(\mathbf{x}_t^i)))$ and the expected loss of the STL model on task $t$ is formulated as $\mathcal{E}^{\text{STL}}_t=\mathbb{E}_{(\mathbf{x},\mathbf{y})\sim \mu_t}[\mathcal{L}_t(\mathbf{y}, h^{\text{STL}}_t(f_t^{\text{STL}}(\mathbf{x})))]$, where $f_t^{\text{STL}}$ and $h^{\text{STL}}_t$ have identical network structures to $f_t$ and $h_t$ in DSMTL, respectively.
The average empirical loss of the STL model is computed as
$L^{\text{STL}} = \frac{1}{m} \sum_{t=1}^mL^{\text{STL}}_t,$
and the average expected loss of the STL model is as
$\mathcal{E}^{\text{STL}} = \frac{1}{m}\sum_{t=1}^m\mathcal{E}_t^{\text{STL}}
$.
Similarly, the average empirical loss of the HPS model is formulated as
\begin{equation*}
L^{\text{HPS}} = \frac{1}{mn}\sum_{t=1}^m\sum_{i=1}^n  \mathcal{L}_t(\mathbf{y}^i_t,h^{\text{HPS}}_t(f^{\text{HPS}}_S(\mathbf{x}_t^i))),
\end{equation*}
where $h^{\text{HPS}}_t$ and $f^{\text{HPS}}_S$ have identical network structures to $h_t$ and $f_S$ in DSMTL, respectively.

The expected loss of DSMTL on task $t$ is formulated  as
\begin{equation*}
\mathcal{E}_t = \mathbb{E}_{(\mathbf{x},\mathbf{y})\sim \mu_t}[\mathcal{L}_t(\mathbf{y}, h_t(g_t(f_S(\mathbf{x}), f_t(\mathbf{x}))].
\end{equation*}
Then the average expected loss of the DSMTL model is computed as $\mathcal{E}=\frac{1}{m}\sum_{t=1}^m\mathcal{E}_t$.

\subsection{Analysis on DSMTL-IL}
\label{sec:analysis_DSMTL_IL}

In DSMTL-IL, since $\alpha_t$ is set to zero for each task in the first stage, the objective function of the first stage is equivalent to the following problem as
\begin{equation}\label{eq:stl_smtl}
\min_{\Theta_S}\ \frac{1}{mn}\sum_{t=1}^m\sum_{i=1}^n  \mathcal{L}_t(\mathbf{y}^i_t,h_t( g_t^0(\emptyset,f_t(\mathbf{x}_t^i)))),
\end{equation}
where $g_t^{0}$ denotes the gate of task $t$ with $\alpha_t$ as 0 and $\emptyset$ denotes a null network. Thus the first stage is to train the STL model with the empirical loss $L^{\text{STL}}_t$ for task $t$. As $g_t$ is a learnable gate, after sufficient training in the second stage, the empirical loss of the DSMTL-IL model on each task is no larger than that of the first stage. Based on this observation, we have the following theorem.\footnote{The proofs for all the theorems are put in the appendix.}

\begin{theorem}\label{thm1}
Let $L^*$ be the optimal value of  problem \eqref{eqa: obj_al} and $L_t^*$ the corresponding empirical loss for task $t$. Then we have $L_t^*\le L^{\text{STL}}_t$ for all $1\le t \le m$.
\end{theorem}
Theorem \ref{thm1} shows that the DSMTL-IL model can achieve \emph{empirically individual safe multi-task learning} in Definition \ref{definition_empirically_safe_multi-task_learning}. Moreover, in the following theorem, we show that it also achieves \emph{probably individual safe multi-task learning}.
\begin{theorem}\label{thm2}
Suppose Assumption \ref{assumption:1} is satisfied. Let $L^*$ be the optimal value of problem \eqref{eqa: obj_al} and the corresponding solution is denoted by $\hat{f}_S$, $\{\hat{f}_t\}$, $\{\hat{h}_t\}$, $\{\hat{g}_t\}$. Let $\hat{\mathcal{E}}_t = \mathcal{E}_t(\hat{f}_S,\hat{f}_t, \hat{h}_t, \hat{g}_t )$. Then for $(\mathbf{X}_t,\mathbf{Y}_t) \sim \mu_t^n$, with probability at least $1-\delta$, we have $$\hat{\mathcal{E}}_t+\epsilon_t \le \mathcal{E}^{\text{STL}}_t + \rho_{n,t},$$ where $\epsilon_t$ is formulated as $\epsilon_t = L^{\text{STL}}_t - L_t^*$, $\rho_{n,t}$ is defined as
$\rho_{n,t} = \frac{C_1 G(\mathcal{F}'(\mathbf{X}_t))}{n} +\frac{ C_2Q}{\sqrt{n}}+\sqrt{\frac{18\ln{\frac{2}{\delta}}}{n}}$,
$\mathcal{F}'(\mathbf{X}_t)=\{ (f(\mathbf{x}_t^i)):f \in \mathcal{F}\}$, $C_1$ and $C_2$ are two constants, $G(\cdot)$ denotes the Gaussian average \cite{bartlett2002rademacher}, and $Q$ is defined as
$Q = \sup_{z\not = \tilde{z} \in \mathbb{R}^{nq}} \mathbb{E} \sup_{h\in \mathcal{H}}
\frac{\langle\gamma,h(z_{i}) - h(\tilde{z}_{i}) \rangle}{\|z-\tilde{z}\|}$
with $\gamma$ as a vector of independent standard normal variables.
\end{theorem}

\begin{remark}

For many classes of interest, the Gaussian average $G(\mathcal{F}'(\mathbf{X}_t))$ is $O(\sqrt{n})$ according to \cite{maurer2016chain}. For reasonable classes $\mathcal{H}$, one can find a bound on $Q$, which is independent of $n$ \cite{maurer2016benefit}. Therefore, $\rho_{n,t}$ is $O(1/\sqrt{n})$ for $1\le t \le m$ and hence $\rho_{n,t}$ satisfies $\lim_{n\to +\infty}\rho_{n,t} = 0$. Moreover, according to Theorem \ref{thm1}, we have $\epsilon_t\ge 0$. Thus, Theorem \ref{thm2} proves that the proposed DSMTL-IL model can achieve \emph{probably individual safe multi-task learning} in Definition \ref{definition_expected_safe_multi-task_learning}.

\end{remark}

\subsection{Analysis on DSMTL-JL}
\label{sec:analysis_DSMTL_JL}

In this section, we analyze the safeness and excess risk bound of the DSMTL-JL model.

For the safeness of the DSMTL-JL model, we have the following theorems.

\begin{theorem}\label{thm3}
Let $L^*$ be the optimal value of problem \eqref{eqa: obj}. Then we have $L^*$ is no higher than the minimun of $L^{\text{STL}}$ and $L^{\text{HPS}}$, i.e.,  $L^* \le \min\{L^{\text{STL}},L^{\text{HPS}}\}$.
\end{theorem}

\begin{remark}

Theorem \ref{thm3} shows that the DSMTL-JL model can achieve \emph{empirically average safe multi-task learning} in Definition \ref{definition_empirically_safe_multi-task_learning}. Moreover, it also implies that it can achieve a lower average empirical loss compared with the corresponding HPS model. To see that, the HPS model for task $t$ can be represented as $h_t(g_t^{1}(f_S(\mathbf{x}),\emptyset))$, where $g_t^{1}$ denotes the gate of task $t$ with $\alpha_t$ as 1. As $g_t^{1}$ is a feasible solution for the DSMTL-JL model, it is easy to see that the empirical loss of the DSMTL-JL model after sufficient training is lower than that of the HPS model, which could be one reason why the DSMTL-JL model outperforms the HPS model as shown in experiments.

\end{remark}


\begin{theorem}\label{thm4}
Suppose Assumption \ref{assumption:1} is satisfied. Let $L^*$ be the optimal value of problem \eqref{eqa: obj} and the corresponding solution is denoted by $\hat{f}_S$, $\{\hat{f}_t\}$, $\{\hat{h}_t\}$, $\{\hat{g}_t\}$. Let $\hat{\mathcal{E}} = \mathcal{E}(\hat{f}_S,\{\hat{f}_t\}, \{\hat{h}_t\}, \{\hat{g}_t\})$. Then for $(\bar{\mathbf{X}},\bar{\mathbf{Y}}) \sim \prod_{t=1}^m(\mu_t)^n$, with probability at least $1-\delta$, we have $$\hat{\mathcal{E}}+\epsilon \le \mathcal{E}^{\text{STL}} + \rho_n,$$
where
$\rho_n =  \frac{C_1 G(\mathcal{F}(\bar{\mathbf{X}}))}{nm}
+ \frac{C_2Q}{\sqrt{n}} +\sqrt{\frac{18\ln{\frac{2}{\delta}}}{mn}}$, $\mathcal{F}(\bar{\mathbf{X}}) = \{ (f_1(\mathbf{x}_t^i),\ldots,f_m(\mathbf{x}_t^i)):f_t \in \mathcal{F}\}$, $C_1$ and $C_2$ are two constants, $\epsilon$ is formulated as $\epsilon = L^{\text{STL}} - L^*$, and $Q$ is defined in Theorem \ref{thm2}.
\end{theorem}

\begin{remark}

According to \cite{maurer2016chain}, the Gaussian average $G(\mathcal{F}(\bar{\mathbf{X}}))$ is $O(\sqrt{mn})$ for many classes of interest. Therefore, $\rho_n$ is $O(\frac{1}{\sqrt{n}})$ and it satisfies $\lim_{n\to +\infty}\rho_n = 0$. According to Theorem \ref{thm3}, $\epsilon\ge 0$. Thus, Theorem \ref{thm4} implies that the proposed DSMTL-JL model can achieve \emph{probably average safe multi-task learning} in Definition \ref{definition_expected_safe_multi-task_learning}.

\end{remark}

To analyze the excess risk bound for the DSMTL-JL model, we define the minimal expected risk as
\begin{equation*}
\mathcal{E}^* = \min_{f_S,f_t\in \mathcal{F}, h_t\in \mathcal{H},\alpha_t \in \mathcal{M}} \mathcal{E}.
\end{equation*}
Then we have the following result.

\begin{theorem}
\label{thm5}
Suppose Assumption \ref{assumption:1} is satisfied. The solution of the optimization problem in Eq. \eqref{eqa: obj} is denoted by $\hat{f}_S$, $\{\hat{f}_t\}$, $\{\hat{h}_t\}$, $\{\hat{g}_t\}$. Let $\hat{\mathcal{E}} = \mathcal{E}(\hat{f}_S,\{\hat{f}_t\}, \{\hat{h}_t\}, \{\hat{g}_t\})$. Then for $(\bar{\mathbf{X}},\bar{\mathbf{Y}}) \sim \prod_{t=1}^m(\mu_t)^n$, with probability at least $1-\delta$, we have
\begin{equation}\label{thm5:eq}
\hat{\mathcal{E}}- \mathcal{E}^* \le \frac{C_1 G(\mathcal{F}(\bar{\mathbf{X}}))}{nm}
+ \frac{C_2Q}{\sqrt{n}} +\sqrt{\frac{8\ln\frac{4}{\delta}}{mn}},
\end{equation}
where $\mathcal{F}(\bar{\mathbf{X}})$ is defined in Theorem \ref{thm4}, $C_1$ and $C_2$ are two constants, and $Q$ is defined in Theorem \ref{thm2}.
\end{theorem}

Theorem \ref{thm5} provides an upper bound on the error of this DSMTL-JL model. In the bound (\ref{thm5:eq}), the first term of the right-hand side can be regarded as the cost to estimate all the feature mappings $\{f_t\}$ and $f_S$, and it decreases with respect to the number of tasks. The order of this term is $O(\frac{1}{\sqrt{mn}})$. The second term of the right-hand side corresponds to the cost to estimate task-specific functions $\{g_t\}$ and $\{h_t\}$, and it is of order $O(\frac{1}{\sqrt{n}})$. The third term defines the confidence of the bound. The convergence rate of this bound is as tight as typical generalization bounds \cite{maurer2016benefit} for MTL.

\section{Architecture Learning for DSMTL}
\label{sec:DSMTL_AL}

A limitation of the DSMTL model is that its model size grows linearly with respect to the number of tasks, which makes its scalability not so good. To address this issue, we propose the DSMTL-AL model to not only achieve comparable or even better performance than the DSMTL-IL and DSMTL-JL models but also learn a more compact architecture via techniques in neural architecture search \cite{DARTS}.


Inspired by the architecture of the DSMTL model introduced in Section \ref{sec:DSMTL_arch}, the supernet in the DSMTL-AL method has a public encoder $f_S$, $m$ private encoders $\{f_t\}_{t=1}^m$, and $m$ private decoders $\{h_t\}_{t=1}^m$.
Instead of treating the public and private encoders as a whole, we divide them into modules, which could be a fully connected layer or a sophisticated ResNet block/layer, depending on the MTL problem under investigation.
Without loss of generality, we assume that both the public and private encoders have the same number of modules, i.e., $(P-1)$.

\begin{figure}[h]
\centering
\includegraphics[width=\linewidth]{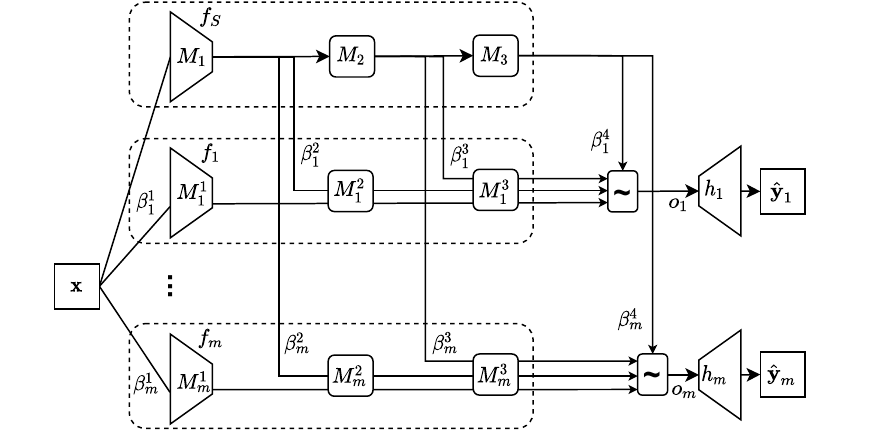}
\caption{An illustration for the architecture learning process in the DSMTL-AL model, where without loss of generality, different tasks are assumed to share the same input data. For simplicity, we assume that there are three modules (i.e., $P=4$) in each encoder.
``$\sim$" represents the convex combination.
In the retraining process, only the branch with the largest architecture parameter whose index is denoted by $p_t^*$ is preserved and the first $(p_t^*-1)$ private modules will be removed.}
\label{fig:dsmtl_al}
\end{figure}

As illustrated in Figure \ref{fig:dsmtl_al}, the DSMTL-AL method is to learn a branching architecture to combine public modules in the public encoder and private modules in the corresponding private encoder for each task to reduce the model size. The search space for the architecture in the DSMTL-AL method consists of $m$ architecture parameters $\{\bm{\beta}_t\}^m_{t=1}$ to decide branch positions for $m$ tasks, where $\bm{\beta}_t=(\beta_t^1,\ldots,\beta_t^P)$.
As a binary parameter, $\beta_t^p\in\{0,1\}$ indicates whether task $t$ branches at the branch position $p$ from $f_S$ to $f_t$. Specifically, when $\beta_t^p$ equals 1, there will be a branch to feed the output of the $(p-1)$-th module in $f_S$ to the $p$-th module in $f_t$ to form a combined encoder for task $t$. Moreover, the sum of entries in $\bm{\beta}_t$ should be 1 for any $t$, indicating that there is only one branch position for each task. In this sense, only part of the private/public modules in all the encoders will be used for each task. Hence the model size is smaller than the entire supernet, which is used in the DSMTL-IL and DSMTL-JL models.

The search space in the DSMTL-AL method includes both STL and HPS architectures as two extremes. When all the tasks are unrelated to each other, the architecture in the DSMTL-AL method could become the STL architecture by choosing $\{f_t\}$ for each task (i.e., $\beta_t^1=1$ for $t=1,\ldots,m$). For highly related or even identical tasks, the architecture of the DSMTL-AL method could become the HPS architecture by choosing $f_S$ only (i.e., $\beta_t^P=1$ for $t=1,\ldots,m$). 


The DSMTL-AL method is to find the best branching architecture in the search space for all the $m$ tasks by learning architecture parameters $\{\bm{\beta}_t\}_{t=1}^m$.
If task $t$ branches at the branch position $p$ of $f_S$ to connect to the corresponding next module in $f_t$, $\beta_t^p$ is set to 1 and all the $\beta_t^i$'s ($i\ne p$) are set to 0. In this case, the output of the combined encoder for task $t$ consisting of the first $(p-1)$ public modules in $f_S$ and the last $(P-p)$ private modules in $f_t$ is $f_t(f_S(\mathbf{x},p-1),p,P-1)$, where $f_S(\cdot,p)$ denotes the output of the $p$-th module in $f_S$ and $f_t(\cdot,p,q)$ denotes the output of the $q$-th module in $f_t$ starting from the $p$-th module. Here $f_S(\mathbf{x},0)$ is defined to be $\mathbf{x}$, corresponding to the input to the first module, and $f_t(\mathbf{x},P,P-1)$ is defined as $\mathbf{x}$.
Since $\{\bm{\beta}_t\}$ are binary variables, this discrete nature makes stochastic gradient descent methods incapable of learning them.
Here we relax $\{\bm{\beta}_t\}$ to be continuous and define them as the probability of branching at each branch position. Specifically, the output of the combined encoder for task $t$ is formulated as
\begin{equation*}
\mathbf{o}_t(\mathbf{x},\bm{\beta}_t) = \sum^{P}_{p=1}
\beta^p_t f_t(f_S(\mathbf{x},p-1),p,P-1),
\end{equation*}
where $\bm{\beta}_t$ is in the $(P-1)$-dimensional simplex set denoted by $\mathcal{S}_P$, satisfying that $\beta^p_t\ge 0$ and $\sum_{p=1}^P\beta^p_t=1$.
Here $\mathbf{o}_t(\mathbf{x},\bm{\beta}_t)$ is a convex combination of outputs of all possible branching architectures weighted by probabilities based on $\bm{\beta}_t$.
Then $\mathbf{o}_t(\mathbf{x},\bm{\alpha}_t)$ is fed into the decoder $h_t$ to generate the prediction and hence the empirical loss for task $t$ is formulated as
\begin{equation*}
L_t({\Theta}_t,\bm{\beta}_t) = \frac{1}{n}\sum_{i=1}^n\mathcal{L}_t(\mathbf{y}^i_t,h_t(\mathbf{o}_t(\mathbf{x}_t^i,\bm{\beta}_t)))
\end{equation*}
where $\Theta_t$ includes all the parameters in $f_S$, $f_t$, and $h_t$.
Then the weighted empirical loss over $m$ tasks is formulated as
\begin{equation}
L(\Theta,\bm{\beta},\bm{w}) = \sum^m_{t=1} w_t L_t (\Theta_t,\bm{\beta}_t),\label{eq:smtlba_lw}
\end{equation}
where $\Theta=\{\Theta_t\}_{t=1}^m$, $\bm{w}=(w_1,\ldots,w_m)$ is in $\mathcal{S}_m$ satisfying $w_t\ge 0$ and $\sum_{t=1}^m w_t=1$, and $\bm{\beta}=(\bm{\beta}_1,\ldots,\bm{\beta}_m)$. $\bm{w}$ specifies the weighting among all the tasks. Setting them to $\frac{1}{m}$ as in the DSMTL-IL and DSMTL-JL models may lead to suboptimal performance and hence we aim to learn them directly.

Here $\Theta$ is viewed as model parameters, while $\bm{\beta}$ and $\bm{w}$ are hyperparameters. To learn all of them, we adopt a bi-level formulation as
\begin{align}
\min_{\bm{\beta}\in\mathcal{S}_P, \bm{w}\in\mathcal{S}_m} & \   L_{val}(\hat{\Theta},\bm{\beta}, \bm{w})\nonumber\\
\mathrm{s.t.} & \  \hat{\Theta} = {\arg\min}_{\Theta}~L_{tr}(\Theta,\bm{\beta}, \bm{w}),
\label{eq:darts_loss}
\end{align}
where the entire training dataset is divided into a training set and a validation set, $\mathcal{L}_{tr}(\cdot,\cdot,\cdot)$ denotes the weighted empirical loss defined in Eq. (\ref{eq:smtlba_lw}) on the training set, and $\mathcal{L}_{val}(\cdot,\cdot,\cdot)$ denotes the weighted empirical loss defined in Eq. (\ref{eq:smtlba_lw}) on the validation set. Here the constraints on $\bm{\beta}$ and $\mathbf{w}$ can be alleviated via the reparameterization based on the softmax function. We adopt the gradient-based hyperparameter optimization algorithm in \cite{franceschi2018bilevel,DARTS}
to solve problem (\ref{eq:darts_loss}) with the first-order approximation.
After solving the problem (\ref{eq:darts_loss}), we can learn  architecture for task $t$ by determining the branch position as
\begin{equation*}
\beta^*_t = {\arg\max}_p(\{\beta_t^p\}^P_{p=1}).
\end{equation*}

With the learned architecture, we can use the entire training dataset to retrain the model parameters $\Theta$. Inspired by the gating mechanism in the DSMTL-IL and DSMTL-JL models, the final encoder for task $t$ consists of the shared encoder and the combined encoder determined by $\beta_t^*$. One reason for that is that the shared encoder $f_S$ could be fully used to improve the performance with little increase or even no increase in the model size, which is due to that all the modules in $f_S$ will usually be chosen by at least one task during the architecture learning process. Formally, the final encoder for task $t$ is formulated as
\begin{align}
\hat{g}_t(\mathbf{x},\beta^*_t) &= \alpha_t f_S(\mathbf{x}) + (1-\alpha_t) f_t(f_S(\mathbf{x}, \beta^*_t-1), \beta^*_t,P-1),\label{DSMTL_AL_final_encoder}
\end{align}
where
with abuse of notations, $\alpha_t\in\mathcal{M}=[0,1]$ is a learnable parameter to measure the weight of $f_S(\mathbf{x})$ and acts similarly to $\alpha_t$ in the DSMTL-IL and DSMTL-JL models.
Mathematically, the objective function of the retraining process is formulated as
\begin{equation}
\min_{\Theta,\bm{\alpha}\in\mathcal{M}} \sum^m_{t=1} \frac{w_t}{n}\sum_{i=1}^n\mathcal{L}_t(\mathbf{y}^i_t,h_t(\hat{g}_t(\mathbf{x}^i_t,\beta^*_t))),
\end{equation}
where $\Theta$ denotes all the model parameters, $\bm{\alpha}=(\alpha,\ldots,\alpha_m)$, and $w_t$ is the loss weight learned from problem (\ref{eq:darts_loss}) for task $t$. After the retraining process, we can use the learned model parameters to make prediction for each task.

Though the DSMTL-AL model cannot be theoretically proved to achieve some version of \emph{safe multi-task learning}, as shown in the next section, empirically it not only achieves good performance but also learns compact architectures.

\section{Experiments}

In this section, we evaluate the proposed models.

\subsection{Datasets and Evaluation Metrics} \label{app:dataset}

Experiments are conducted on four MTL CV datasets, including 
CityScapes \cite{cordts2016cityscapes}, NYUv2 \cite{silberman2012indoor}, PASCAL-Context \cite{MottaghiCLCLFUY14}, and Taskonomy \cite{zamir2018taskonomy}. 

The CityScapes dataset consists of high resolution outside street-view images. By following \cite{liu2019end}, we evaluate the performance on the $7$-class semantic segmentation and depth estimation tasks. The NYUv2 dataset consists of \mbox{RGB-D} indoor scene images from three learning tasks: $13$-class  semantic segmentation, depth estimation, and surface normal prediction. The PASCAL-Context dataset is an annotation extension of the PASCAL VOC 2010 challenge with four learning tasks: $21$-class semantic segmentation, $7$-class human parts segmentation, saliency estimation, and surface normal estimation, where the last two tasks are generated by \cite{ManinisRK19}. The Taskonomy dataset contains indoor images. By following \cite{standley2020tasks}, we sample five learning tasks, including $17$-class semantic segmentation, depth estimation, keypoint detection, edge detection, and surface normal prediction. 

The semantic segmentation task on the PASCAL-Context dataset is evaluated by the mean Intersection over Union (mIoU) by following \cite{ManinisRK19}. On the other three CV datasets, this task is additionally evaluated in terms of the Pixel Error (abbreviated as `Pix Err') by following \cite{SunPFS20}. For the depth estimation task, the absolute error (abbreviated as `Abs Err') and relative error (abbreviated as `Rel Err') are used as the evaluation metrics. For the surface normal prediction task, the mean and median angle distances between the prediction and ground truth of all pixels are used as measures. For this task, the percentage of pixels, whose prediction is within the angles of $11.25^{\circ}$, $22.5^{\circ}$, and $30^{\circ}$ to the ground truth, is used as another measure. For the keypoint detection and edge detection tasks, the absolute error (abbreviated as `Abs Err') is used as the evaluation metric. For the human parts segmentation task, the mIoU is used as the measure. For the saliency estimation task, the mIoU and max F-measure (maxF) are adopted as the evaluation metrics.

As introduced above, for each task, we use one or more evaluation metrics to thoroughly evaluate the performance.
To better show the comparison between each method and STL, we compute the relative performance of each method over STL in terms of the $j$th evaluation metric on task $t$ as $\Delta_{t,j}=(-1)^{p_{t,j}}(\mathrm{M}_{t,j}-\mathrm{STL}_{t,j})$, where for a method $\mathrm{M}$, $\mathrm{M}_{t,j}$ denotes its performance in terms of the $j$th evaluation metric for task $t$, $\mathrm{STL}_{t,j}$ is defined similarly, $p_{t,j}$ equals 1 if a lower value represents better performance in terms of the $j$th metric in task $t$ and $0$ otherwise. So positive relative performance indicates better performance than STL.
The overall relative improvement of a method $\mathrm{M}$ over STL is defined as $\Delta_I = \frac{1}{m}\sum_{t=1}^m \frac{1}{m_t}\sum_{j=1}^{m_t} \frac{\Delta_{t,j}} {\mathrm{STL}_{t, j}}$, where $m_t$ denotes the number of evaluation metrics in task $t$.


To empirically measure the safeness of each model, we define the safeness coefficient $\eta$ for a model as the proportion of tasks on which this model empirically performs no worse than the STL model.
Formally, $\eta$ is formulated as ${\eta} = \frac{1}{m}\sum_{t=1}^m \frac{1}{m_t}\sum_{j=1}^{m_t} \delta(\Delta_{t,j})\times 100$, where $\delta(x)$ is the delta function that outputs 0 when $x < 0$ and otherwise 1.
Obviously, $\eta$, whose maximum is 100, is expected to be as large as possible.

\begin{table}[!htb]
\centering
\caption{Performance of various models on the CityScapes validation dataset. $\uparrow$ ($\downarrow$) indicates the higher (lower) the result, the better the performance. The green color indicates that the corresponding method performs better than the STL method and the red color indicates oppositely. The number of parameters (abbreviated as Parms.) is calculated in MB.}
\vskip -0.1in
\resizebox{\linewidth}{!}{
\begin{tabular}{lccccccc}
\toprule
 \multirow{2}{*}{Method} & \multicolumn{2}{c}{Segmentation} & \multicolumn{2}{c}{Depth} & \multirow{2}{*}{$\Delta_I \uparrow$} & \multirow{2}{*}{${\eta}\uparrow$} & \multirow{2}{*}{Parms. (M)$\downarrow$} \\
 \cmidrule(r){2-3} \cmidrule(r){4-5}
 & {mIoU $\uparrow$} &  {Pix Err $\downarrow$} &  {Abs Err $\downarrow$} &  {Rel Err$\downarrow$} \\
\cmidrule(lr){1-8}
STL  & $67.48$ & $9.00$ & $0.0139$  & $46.2507$ & 0 & - &79.27\\
\cmidrule(lr){1-8}
HPS &
\colorbox{neg}{$-0.08$} &\colorbox{neg}{$-0.08$} &\colorbox{neg}{$-0.0003$} &\colorbox{pos}{$+0.8245$} &$-0.0035$ &25 &55.76 \\
Cross-stitch &
\colorbox{pos}{$+0.53$} &\colorbox{pos}{$+0.29$} &\colorbox{pos}{$+0.0004$} &\colorbox{pos}{$+1.8261$} &$+0.0271$ &100 &79.26 \\
MTAN &
\colorbox{pos}{$+1.49$} &\colorbox{pos}{$+0.59$} &\colorbox{pos}{$+0.0003$} &\colorbox{pos}{$+2.4999$} &$+0.0408$ &100 &72.04 \\
NDDR-CNN &
\colorbox{pos}{$+0.54$} &\colorbox{pos}{$+0.25$} &\colorbox{pos}{$+0.0002$} &\colorbox{pos}{$+1.3845$} &$+0.0200$ &100 &101.58 \\
AFA &
\colorbox{pos}{$+1.44$} &\colorbox{pos}{$+0.52$} &\colorbox{neg}{$-0.0019$} &\colorbox{neg}{$-0.9136$} &$-0.0193$ &50 &87.09 \\
RotoGrad &
\colorbox{pos}{$+0.69$} &\colorbox{pos}{$+0.37$} &\colorbox{pos}{$+0.0004$} &\colorbox{pos}{$+1.2729$} &$+0.0274$ &100 &57.86 \\
\cmidrule(lr){1-8}
MaxRoam &
\colorbox{neg}{$-0.76$} &\colorbox{pos}{$+0.33$} &\colorbox{neg}{$-0.0003$} &\colorbox{pos}{$+3.9607$} &$+0.0224$ &50 &55.76 \\
MTL-NAS &
\colorbox{neg}{$-2.13$} &\colorbox{neg}{$-0.80$} &\colorbox{neg}{$-0.0005$} &\colorbox{neg}{$-1.6786$} &$-0.0482$ &0 &87.26 \\
BMTAS &
\colorbox{pos}{$+2.09$} &\colorbox{pos}{$+0.81$} &\colorbox{pos}{$+0.0017$} &\colorbox{pos}{$+1.9947$} &$+0.0723$ &100 &79.04 \\
TSN &
\colorbox{pos}{$+2.27$} &\colorbox{pos}{$+0.85$} &\colorbox{pos}{$+0.0011$} &\colorbox{pos}{$+0.4047$} &$+0.0544$ &100 &50.58 \\
LTB &
\colorbox{pos}{$+2.24$} &\colorbox{pos}{$+0.82$} &\colorbox{pos}{$+0.0014$} &\colorbox{neg}{$-0.4489$} &$+0.0539$ &75 &70.73 \\
\cmidrule(lr){1-8}
DSMTL-IL &
\colorbox{pos}{$+2.94$} &\colorbox{pos}{$+1.13$} &\colorbox{pos}{$+0.0015$} &\colorbox{pos}{$+0.4879$} &$+0.0715$ &100 &102.78 \\
DSMTL-JL &
\colorbox{pos}{$+1.50$} &\colorbox{pos}{$+0.62$} &\colorbox{pos}{$+0.0005$} &\colorbox{pos}{$+3.3886$} &$+0.0501$ &100 &102.78 \\
DSMTL-AL &
\colorbox{pos}{$+3.07$} &\colorbox{pos}{$+1.18$} &\colorbox{pos}{$+0.0017$} &\colorbox{pos}{$+4.3300$} &$+0.0988$ &100 &79.04 \\
\bottomrule
\end{tabular}
}
\label{tab:cityscapes}
\end{table}

\begin{table*}[!htbp]
\centering
\caption{Performance of various models on the NYUv2 validation dataset. $\uparrow$ ($\downarrow$) indicates the higher (lower) the result, the better the performance. The green color indicates that the corresponding method performs better than the STL method and the red color indicates oppositely. The number of parameters (abbreviated as Parms.) is calculated in MB.}
\vskip -0.1in
\resizebox{\linewidth}{!}{
\begin{tabular}{lcccccccccccc}
\toprule
\multirow{3}{*}{Method} & \multicolumn{2}{c}{Segmentation} & \multicolumn{2}{c}{Depth} & \multicolumn{5}{c}{Surface Normal} & \multirow{3}{*}{$\Delta_I\uparrow$} & \multirow{3}{*}{${\eta}$} & \multirow{3}{*}{Parms. (M)$\downarrow$} \\
\cmidrule(r){2-3} \cmidrule(r){4-5} \cmidrule(r){6-10}
& \multirow{2}{*}{mIoU $\uparrow$} &  \multirow{2}{*}{Pix Err $\downarrow$} &  \multirow{2}{*}{Abs Err $\downarrow$} &  \multirow{2}{*}{Rel Err$\downarrow$} & \multicolumn{2}{c}{Angle Distance $\downarrow$} & \multicolumn{3}{c}{Within $t^{\circ}$ $\uparrow$} \\
\cmidrule(r){6-7} \cmidrule(r){8-10} & & & & & Mean  & Median   & 11.25  & 22.5  & 30   \\
\cmidrule(lr){1-13}
STL  & $53.11$ & $4.80$ & $0.3957$ & $0.1632$ & $22.26$ & $15.49$ & $38.61$ & $64.43$ & $74.69$ & 0 & - & 118.91\\
\cmidrule(lr){1-13}
HPS &
\colorbox{pos}{$+1.37$} &\colorbox{pos}{$+0.62$} &\colorbox{pos}{$+0.0118$} &\colorbox{pos}{$+0.0084$} &\colorbox{neg}{$-1.24$} &\colorbox{neg}{$-1.57$} &\colorbox{neg}{$-3.30$} &\colorbox{neg}{$-3.33$} &\colorbox{neg}{$-2.55$} &$+0.0001$ &67 &71.89 \\
Cross-stitch &
\colorbox{pos}{$+0.35$} &\colorbox{pos}{$+0.29$} &\colorbox{pos}{$+0.0153$} &\colorbox{pos}{$+0.0077$} &\colorbox{neg}{$-0.75$} &\colorbox{neg}{$-0.84$} &\colorbox{neg}{$-1.60$} &\colorbox{neg}{$-2.01$} &\colorbox{neg}{$-1.68$} &$+0.0051$ &67 &118.89 \\
MTAN &
\colorbox{pos}{$+1.63$} &\colorbox{pos}{$+0.58$} &\colorbox{pos}{$+0.0161$} &\colorbox{pos}{$+0.0083$} &\colorbox{neg}{$-0.71$} &\colorbox{neg}{$-0.81$} &\colorbox{neg}{$-1.70$} &\colorbox{neg}{$-1.80$} &\colorbox{neg}{$-1.38$} &$+0.0127$ &67 &92.35 \\
NDDR-CNN &
\colorbox{pos}{$+0.73$} &\colorbox{pos}{$+0.03$} &\colorbox{pos}{$+0.0086$} &\colorbox{pos}{$+0.0072$} &\colorbox{neg}{$-0.34$} &\colorbox{neg}{$-0.58$} &\colorbox{neg}{$-0.94$} &\colorbox{neg}{$-1.00$} &\colorbox{neg}{$-0.77$} &$+0.0066$ &67 &169.10 \\
AFA &
\colorbox{neg}{$-1.57$} &\colorbox{neg}{$-1.29$} &\colorbox{neg}{$-0.0073$} &\colorbox{neg}{$-0.0060$} &\colorbox{neg}{$-1.97$} &\colorbox{neg}{$-1.91$} &\colorbox{neg}{$-3.54$} &\colorbox{neg}{$-4.21$} &\colorbox{neg}{$-3.73$} &$-0.0507$ &0 &136.88 \\
RotoGrad &
\colorbox{pos}{$+0.98$} &\colorbox{pos}{$+0.05$} &\colorbox{pos}{$+0.0157$} &\colorbox{pos}{$+0.0062$} &\colorbox{neg}{$-0.79$} &\colorbox{neg}{$-1.01$} &\colorbox{neg}{$-2.63$} &\colorbox{neg}{$-2.56$} &\colorbox{neg}{$-1.77$} &$+0.0009$ &67 &75.03 \\
\cmidrule(lr){1-13}
MaxRoam &
\colorbox{pos}{$+1.42$} &\colorbox{pos}{$+0.41$} &\colorbox{pos}{$+0.0078$} &\colorbox{neg}{$-0.0068$} &\colorbox{pos}{$+0.36$} &\colorbox{pos}{$+0.51$} &\colorbox{neg}{$-2.03$} &\colorbox{neg}{$-1.55$} &\colorbox{neg}{$-2.46$} &$-0.0005$ &63 &71.89 \\
MTL-NAS &
\colorbox{pos}{$+0.81$} &\colorbox{pos}{$+0.01$} &\colorbox{pos}{$+0.0110$} &\colorbox{pos}{$+0.0102$} &\colorbox{neg}{$-0.15$} &\colorbox{neg}{$-0.52$} &\colorbox{neg}{$-0.46$} &\colorbox{neg}{$-0.22$} &\colorbox{neg}{$-2.11$} &$+0.0121$ &67 &183.40 \\
BMTAS &
\colorbox{pos}{$+0.73$} &\colorbox{pos}{$+0.22$} &\colorbox{pos}{$+0.0067$} &\colorbox{pos}{$+0.0017$} &\colorbox{pos}{$+0.01$} &\colorbox{neg}{$-0.03$} &\colorbox{neg}{$-0.00$} &\colorbox{neg}{$-0.08$} &\colorbox{neg}{$-0.00$} &$+0.0082$ &73 &116.04 \\
TSN &
\colorbox{neg}{$-0.87$} &\colorbox{neg}{$-0.75$} &\colorbox{neg}{$-0.0221$} &\colorbox{neg}{$-0.0067$} &\colorbox{pos}{$+0.50$} &\colorbox{pos}{$+0.30$} &\colorbox{pos}{$+1.35$} &\colorbox{pos}{$+1.22$} &\colorbox{pos}{$+0.90$} &$-0.0167$ &33 &50.58 \\
LTB &
\colorbox{pos}{$+0.31$} &\colorbox{pos}{$+0.23$} &\colorbox{pos}{$+0.0052$} &\colorbox{pos}{$+0.0096$} &\colorbox{pos}{$+0.02$} &\colorbox{neg}{$-0.22$} &\colorbox{neg}{$-0.86$} &\colorbox{neg}{$-0.33$} &\colorbox{pos}{$+0.04$} &$+0.0119$ &80 &86.85 \\
\cmidrule(lr){1-13}
DSMTL-IL &
\colorbox{pos}{$+0.60$} &\colorbox{pos}{$+0.30$} &\colorbox{pos}{$+0.0007$} &\colorbox{pos}{$+0.0005$} &\colorbox{pos}{$+0.26$} &\colorbox{pos}{$+0.06$} &\colorbox{pos}{$+0.14$} &\colorbox{pos}{$+0.33$} &\colorbox{pos}{$+0.41$} &$+0.0067$ &100 &142.41 \\
DSMTL-JL &
\colorbox{pos}{$+0.75$} &\colorbox{pos}{$+0.36$} &\colorbox{pos}{$+0.0114$} &\colorbox{pos}{$+0.0051$} &\colorbox{pos}{$+0.48$} &\colorbox{pos}{$+0.47$} &\colorbox{pos}{$+1.27$} &\colorbox{pos}{$+1.00$} &\colorbox{pos}{$+0.73$} &$+0.0221$ &100 &142.41 \\
DSMTL-AL &
\colorbox{pos}{$+1.25$} &\colorbox{pos}{$+0.71$} &\colorbox{pos}{$+0.0144$} &\colorbox{pos}{$+0.0070$} &\colorbox{pos}{$+0.45$} &\colorbox{pos}{$+0.40$} &\colorbox{pos}{$+0.83$} &\colorbox{pos}{$+0.99$} &\colorbox{pos}{$+0.74$} &$+0.0281$ &100 &93.96 \\
\bottomrule
\end{tabular}
}
\label{tab:nyuv2}
\end{table*}

\begin{table*}[!htb]
\centering
\caption{Performance of various models on the PASCAL-Context validation dataset. $\uparrow$ ($\downarrow$) indicates the higher (lower) the result, the better the performance. The green color indicates that the corresponding method performs better than the STL method and the red color indicates oppositely. The number of parameters (abbreviated as Parms.) is calculated in MB.}
\vskip -0.1in
\resizebox{\textwidth}{!}{
\begin{tabular}{lcccccccccccc}
\toprule
\multirow{3}{*}{Method} & \multicolumn{1}{c}{Segmentation} & \multicolumn{1}{c}{Human Parts} &
\multicolumn{2}{c}{Saliency} & \multicolumn{5}{c}{Surface Normal} & \multirow{3}{*}{$\Delta_I \uparrow$} &  \multirow{3}{*}{${\eta}\uparrow$} & \multirow{3}{*}{Parms. (M)$\downarrow$} \\
\cmidrule(lr){2-2} \cmidrule(lr){3-3} \cmidrule(lr){4-5} \cmidrule(l){6-10}
&\multirow{2.5}{*}{mIoU${\uparrow}$} & \multirow{2.5}{*}{mIoU${\uparrow}$} &
\multirow{2.5}{*}{mIoU${\uparrow}$} &
\multirow{2.5}{*}{maxF${\uparrow}$} &
\multicolumn{2}{c}{Angle Distance $\downarrow$} & \multicolumn{3}{c}{Within $t^{\circ}$ $\uparrow$} \\
\cmidrule(lr){6-7} \cmidrule(l){8-10} & & & & & Mean & Median  & 11.25 & 22.5 & 30 \\
\cmidrule(lr){1-13}
STL & $65.14$ & $58.58$ & $65.02$ & $77.47$ & $15.94$ & $24.87$ & $48.42$ & $80.79$ & $90.03$ & 0 & - & 63.60 \\
\cmidrule(lr){1-13}
HPS & \colorbox{neg}{$-0.37$} & \colorbox{neg}{$-0.67$} & \colorbox{neg}{$-0.92$} & \colorbox{neg}{$-0.51$} & \colorbox{neg}{$-1.73$} & \colorbox{neg}{$-1.29$} & \colorbox{neg}{$-6.43$} & \colorbox{neg}{$-4.86$} & \colorbox{neg}{$-3.02$} & $-0.0262$ & $0$ & 30.07 \\
Cross-stitch  & \colorbox{neg}{$-0.17$} & \colorbox{pos}{$+0.05$} & \colorbox{neg}{$-0.56$} & \colorbox{neg}{$-0.40$} & \colorbox{neg}{$-0.62$} & \colorbox{neg}{$-0.45$} & \colorbox{neg}{$-2.36$} & \colorbox{neg}{$-2.68$} & \colorbox{neg}{$-1.04$} & $-0.0096$ & $25$ & 79.46 \\
MTAN  & \colorbox{neg}{$-0.58$} & \colorbox{pos}{$+0.50$} & \colorbox{neg}{$-0.45$} & \colorbox{neg}{$-0.23$} & \colorbox{neg}{$-1.20$} & \colorbox{neg}{$-0.89$} & \colorbox{neg}{$-4.58$} & \colorbox{neg}{$-3.31$} & \colorbox{neg}{$-2.04$} & $-0.0147$ & $25$ & 36.61 \\
NDDR-CNN  & \colorbox{pos}{$+0.14$} & \colorbox{pos}{$+0.60$} & \colorbox{pos}{$+0.07$} & \colorbox{pos}{$+0.00$} & \colorbox{neg}{$-0.37$} & \colorbox{neg}{$-0.24$} & \colorbox{neg}{$-1.50$} & \colorbox{neg}{$-0.94$} & \colorbox{neg}{$-0.59$} & $-0.0008$ & $75$ & 69.25  \\
AFA & \colorbox{pos}{$+{2.12}$} & \colorbox{pos}{$+{2.11}$} & \colorbox{neg}{$-1.95$} & \colorbox{neg}{$-3.96$} & \colorbox{neg}{$-1.63$} & \colorbox{neg}{$-1.28$} & \colorbox{neg}{$-5.68$} & \colorbox{neg}{$-4.42$} & \colorbox{neg}{$-2.88$} & $-0.0108$ & $50$ & 199.1 \\
RotoGrad &
\colorbox{neg}{$-1.57$} &\colorbox{neg}{$-0.23$} &\colorbox{pos}{$+0.34$} &\colorbox{pos}{$+0.35$} &\colorbox{pos}{$+0.11$} &\colorbox{pos}{$+0.05$} &\colorbox{pos}{$+0.14$} &\colorbox{pos}{$+0.13$} &\colorbox{pos}{$+0.07$} &$-0.0051$ &50 &33.22 \\
\cmidrule(lr){1-13}
MaxRoam &
\colorbox{neg}{$-1.33$} &\colorbox{pos}{$+0.71$} &\colorbox{neg}{$-0.99$} &\colorbox{neg}{$-3.42$} &\colorbox{pos}{$+0.16$} &\colorbox{pos}{$+0.10$} &\colorbox{pos}{$+0.35$} &\colorbox{pos}{$+0.23$} &\colorbox{pos}{$+0.15$} &$-0.0082$ &50 &30.07 \\
MTL-NAS &
\colorbox{neg}{$-0.68$} &\colorbox{pos}{$+0.18$} &\colorbox{neg}{$-0.67$} &\colorbox{neg}{$-0.20$} &\colorbox{pos}{$+0.68$} &\colorbox{pos}{$+0.47$} &\colorbox{pos}{$+2.56$} &\colorbox{pos}{$+1.58$} &\colorbox{pos}{$+0.87$} &$+0.0037$ &50 &41.24 \\
BMTAS &
\colorbox{neg}{$-0.14$} &\colorbox{pos}{$+0.39$} &\colorbox{neg}{$-0.36$} &\colorbox{neg}{$-0.15$} &\colorbox{neg}{$-0.06$} &\colorbox{neg}{$-0.06$} &\colorbox{neg}{$-0.34$} &\colorbox{neg}{$-0.29$} &\colorbox{neg}{$-0.10$} &-0.0006 &12 &45.28 \\
TSN &
\colorbox{pos}{$+1.63$} &\colorbox{neg}{$-0.67$} &\colorbox{pos}{$+0.08$} &\colorbox{neg}{$-0.51$} &\colorbox{neg}{$-0.62$} &\colorbox{neg}{$-0.29$} &\colorbox{neg}{$-5.43$} &\colorbox{neg}{$-3.86$} &\colorbox{neg}{$-2.02$} &-0.0089 &25 & 26.85 \\
LTB &
\colorbox{neg}{$-0.65$} &\colorbox{pos}{$+1.55$} &\colorbox{neg}{$-0.93$} &\colorbox{neg}{$-3.59$} &\colorbox{pos}{$+0.58$} &\colorbox{pos}{$+0.39$} &\colorbox{pos}{$+2.03$} &\colorbox{pos}{$+1.36$} &\colorbox{pos}{$+0.74$} &+0.0025 &75 &62.60 \\
\cmidrule(lr){1-13}
DSMTL-IL &
\colorbox{pos}{$+0.01$} & \colorbox{pos}{$+1.05$} & \colorbox{pos}{$+0.20$} & \colorbox{pos}{$+0.13$} & \colorbox{pos}{$+0.26$} & \colorbox{pos}{$+0.22$} & \colorbox{pos}{$+1.08$} & \colorbox{pos}{$+0.74$} & \colorbox{pos}{$+0.39$} & $+0.0082$ & $100$ & 74.78 \\
DSMTL-JL &
\colorbox{pos}{$+0.18$} &\colorbox{pos}{$+1.87$} &\colorbox{pos}{$+0.44$} &\colorbox{pos}{$+0.47$} &\colorbox{pos}{$+0.41$} &\colorbox{pos}{$+0.27$} &\colorbox{pos}{$+1.21$} &\colorbox{pos}{$+0.86$} &\colorbox{pos}{$+0.50$} &$+0.0142$ &100 & 74.78 \\
DSMTL-AL &
\colorbox{pos}{$+0.82$} &\colorbox{pos}{$+1.28$} &\colorbox{pos}{$+0.37$} &\colorbox{pos}{$+0.31$} &\colorbox{pos}{$+0.04$} &\colorbox{pos}{$+0.02$} &\colorbox{pos}{$+0.01$} &\colorbox{pos}{$+0.01$} &\colorbox{pos}{$+0.09$} &$+0.0106$ &100 & 63.15 \\
\bottomrule
\end{tabular}
}
\label{tab:pascal}
\end{table*}

\begin{table*}[!htb]
\centering
\caption{Performance of various models on the Taskonomy validation dataset. $\uparrow$ ($\downarrow$) indicates the higher (lower) the result, the better the performance. The green color indicates that the corresponding method performs better than the STL method and the red color indicates oppositely. The number of parameters (abbreviated as Parms.) is calculated in MB.}
\vskip -0.1in
\resizebox{\textwidth}{!}{
\begin{tabular}{lcccccccccccccc}
\toprule
  \multirow{3}{*}{Method} & \multicolumn{2}{c}{Segmentation} & \multicolumn{2}{c}{Depth} & Keypoints & Edges & \multicolumn{5}{c}{Surface Normal} & \multirow{3}{*}{$\Delta_I \uparrow$} & \multirow{3}{*}{${\eta}\uparrow$} & \multirow{3}{*}{Parms. (M)$\downarrow$} \\
 \cmidrule(r){2-3} \cmidrule(r){4-5} \cmidrule(r){6-6} \cmidrule(r){7-7} \cmidrule(r){8-12}
 & \multirow{2}{*}{mIoU $\uparrow$} &  \multirow{2}{*}{Pix Err $\downarrow$} &  \multirow{2}{*}{Abs Err $\downarrow$} &  \multirow{2}{*}{Rel Err$\downarrow$} & \multirow{2}{*}{Abs Err $\downarrow$} & \multirow{2}{*}{Abs Err $\downarrow$} &  \multicolumn{2}{c}{Angle Distance $\downarrow$} & \multicolumn{3}{c}{Within $t^{\circ}$ $\uparrow$}  \\ \cmidrule(r){8-9} \cmidrule(r){10-12} & & & &  & & & Mean  & Median   & 11.25  & 22.5  & 30  \\
\cmidrule(lr){1-15}
STL & $65.42$ & $2.37$ & $0.0072$ & $0.0117$ & $0.1103$ & $0.1349$ & $10.39$ & $4.19$ & $73.67$ & $86.21$ & $90.52$ & 0 & - & 79.50 \\
\cmidrule(lr){1-15}
HPS &
\colorbox{pos}{$+0.33$} &\colorbox{pos}{$+0.05$} &\colorbox{neg}{$-0.0011$} &\colorbox{neg}{$-0.0018$} &\colorbox{pos}{$+0.0015$} &\colorbox{neg}{$-0.0004$} &\colorbox{neg}{$-0.69$} &\colorbox{neg}{$-0.45$} &\colorbox{neg}{$-1.94$} &\colorbox{neg}{$-1.09$} &\colorbox{neg}{$-0.79$} &$-0.0348$ &40 &34.79 \\
Cross-stitch &
\colorbox{pos}{$+0.87$} &\colorbox{pos}{$+0.67$} &\colorbox{pos}{$+0.0005$} &\colorbox{pos}{$+0.0009$} &\colorbox{pos}{$+0.0100$} &\colorbox{pos}{$+0.0019$} &\colorbox{pos}{$+1.50$} &\colorbox{pos}{$+0.17$} &\colorbox{pos}{$+0.42$} &\colorbox{pos}{$+0.38$} &\colorbox{pos}{$+0.30$} &$+0.0603$ &100 &79.46 \\
MTAN &
\colorbox{pos}{$+0.34$} &\colorbox{pos}{$+0.69$} &\colorbox{neg}{$-0.0011$} &\colorbox{neg}{$-0.0019$} &\colorbox{neg}{$-0.0248$} &\colorbox{neg}{$-0.0111$} &\colorbox{pos}{$+0.99$} &\colorbox{neg}{$-0.18$} &\colorbox{pos}{$+2.30$} &\colorbox{pos}{$+2.89$} &\colorbox{pos}{$+2.41$} &$+0.0240$ &36 &36.61 \\
NDDR-CNN &
\colorbox{pos}{$+0.64$} &\colorbox{pos}{$+0.69$} &\colorbox{pos}{$+0.0002$} &\colorbox{neg}{$-0.0028$} &\colorbox{pos}{$+0.0133$} &\colorbox{pos}{$+0.0043$} &\colorbox{pos}{$+1.86$} &\colorbox{pos}{$+0.47$} &\colorbox{pos}{$+0.48$} &\colorbox{pos}{$+0.42$} &\colorbox{pos}{$+0.33$} &$+0.0593$ &90 &88.32 \\
AFA &
\colorbox{pos}{$+0.84$} &\colorbox{pos}{$+0.57$} &\colorbox{neg}{$-0.0016$} &\colorbox{neg}{$-0.0027$} &\colorbox{pos}{$+0.0465$} &\colorbox{pos}{$+0.0524$} &\colorbox{pos}{$+1.08$} &\colorbox{neg}{$-0.17$} &\colorbox{pos}{$+0.30$} &\colorbox{pos}{$+0.33$} &\colorbox{pos}{$+0.27$} &$+0.0476$ &76 &242.1 \\
RotoGrad &
\colorbox{pos}{$+0.01$} &\colorbox{neg}{$-0.03$} &\colorbox{pos}{$+0.0012$} &\colorbox{pos}{$+0.0019$} &\colorbox{pos}{$+0.0028$} &\colorbox{pos}{$+0.0003$} &\colorbox{pos}{$+0.13$} &\colorbox{pos}{$+0.27$} &\colorbox{pos}{$+0.20$} &\colorbox{pos}{$+0.05$} &\colorbox{pos}{$+0.02$} &$+0.0213$ &90 &37.94 \\
\cmidrule(lr){1-15}
MaxRoam &
\colorbox{neg}{$-0.61$} &\colorbox{neg}{$-0.10$} &\colorbox{pos}{$+0.0013$} &\colorbox{pos}{$+0.0021$} &\colorbox{pos}{$+0.0028$} &\colorbox{pos}{$+0.0015$} &\colorbox{pos}{$+0.06$} &\colorbox{pos}{$+0.22$} &\colorbox{pos}{$+0.06$} &\colorbox{neg}{$-0.04$} &\colorbox{neg}{$-0.05$} &$+0.0164$ &72 &34.79 \\
MTL-NAS &
\colorbox{pos}{$+0.01$} &\colorbox{neg}{$-0.04$} &\colorbox{pos}{$+0.0014$} &\colorbox{pos}{$+0.0022$} &\colorbox{pos}{$+0.0027$} &\colorbox{pos}{$+0.0014$} &\colorbox{pos}{$+0.16$} &\colorbox{pos}{$+0.30$} &\colorbox{pos}{$+0.35$} &\colorbox{pos}{$+0.13$} &\colorbox{pos}{$+0.04$} &$+0.0253$ &90 &45.97 \\
BMTAS &
\colorbox{neg}{$-0.53$} &\colorbox{neg}{$-0.10$} &\colorbox{pos}{$+0.0019$} &\colorbox{pos}{$+0.0031$} &\colorbox{pos}{$+0.0048$} &\colorbox{pos}{$+0.0005$} &\colorbox{pos}{$+0.03$} &\colorbox{pos}{$+0.17$} &\colorbox{pos}{$+0.01$} &\colorbox{neg}{$-0.08$} &\colorbox{neg}{$-0.10$} &$+0.0217$ &72 &66.27 \\
TSN &
\colorbox{neg}{$-0.07$} &\colorbox{neg}{$-0.02$} &\colorbox{pos}{$+0.0012$} &\colorbox{pos}{$+0.0019$} &\colorbox{pos}{$+0.0020$} &\colorbox{pos}{$+0.0001$} &\colorbox{pos}{$+0.03$} &\colorbox{pos}{$+0.24$} &\colorbox{neg}{$-0.12$} &\colorbox{neg}{$-0.20$} &\colorbox{neg}{$-0.19$} &$+0.0184$ &68 &26.85 \\
LTB &
\colorbox{pos}{$+0.30$} &\colorbox{pos}{$+0.01$} &\colorbox{pos}{$+0.0015$} &\colorbox{pos}{$+0.0024$} &\colorbox{pos}{$+0.0038$} &\colorbox{pos}{$+0.0001$} &\colorbox{pos}{$+0.46$} &\colorbox{pos}{$+0.69$} &\colorbox{pos}{$+0.80$} &\colorbox{pos}{$+0.23$} &\colorbox{pos}{$+0.13$} &$+0.0426$ &100 &43.19 \\
\cmidrule(lr){1-15}
DSMTL-IL &
\colorbox{pos}{$+0.67$} &\colorbox{pos}{$+0.01$} &\colorbox{pos}{$+0.0034$} &\colorbox{pos}{$+0.0055$} &\colorbox{pos}{$+0.0056$} &\colorbox{pos}{$+0.0014$} &\colorbox{pos}{$+0.97$} &\colorbox{pos}{$+1.06$} &\colorbox{pos}{$+2.00$} &\colorbox{pos}{$+0.95$} &\colorbox{pos}{$+0.65$} &$+0.0844$ &100 &90.68 \\
DSMTL-JL &
\colorbox{pos}{$+0.89$} &\colorbox{pos}{$+0.03$} &\colorbox{pos}{$+0.0035$} &\colorbox{pos}{$+0.0057$} &\colorbox{pos}{$+0.0056$} &\colorbox{pos}{$+0.0014$} &\colorbox{pos}{$+0.98$} &\colorbox{pos}{$+1.08$} &\colorbox{pos}{$+2.00$} &\colorbox{pos}{$+0.95$} &\colorbox{pos}{$+0.66$} &$+0.0872$ &100 &90.68 \\
DSMTL-AL &
\colorbox{pos}{$+0.45$} &\colorbox{pos}{$+0.02$} &\colorbox{pos}{$+0.0012$} &\colorbox{pos}{$+0.0019$} &\colorbox{pos}{$+0.0038$} &\colorbox{pos}{$+0.0013$} &\colorbox{pos}{$+0.59$} &\colorbox{pos}{$+0.83$} &\colorbox{pos}{$+1.11$} &\colorbox{pos}{$+0.39$} &\colorbox{pos}{$+0.22$} &$+0.0455$ &100 &54.36 \\
\bottomrule
\end{tabular}
}
\label{tab:taskonomy}
\end{table*}

\subsection{Experimental Setup}
\label{sec:experimental_setup}

The baseline methods in comparison include the Single-Task Learning (STL) that trains each task separately, popular MTL architectures including the HPS model that adopts the multi-head hard sharing architecture,  Cross-stitch \cite{misra2016cross}, MTAN \cite{liu2019end}, NDDR-CNN \cite{gao2019nddr}, RotoGrad \cite{javaloy2022rotograd}, and AFA \cite{cui2021adaptive}, and popular architecture learning methods for MTL such as
MaxRoam \cite{MaxRoam}, TSN \cite{sun2021task},
MTL-NAS \cite{MTL-NAS}, BMTAS \cite{BMTAS}, and LTB \cite{LTB}. For fair comparison, we use the same backbone for all the models.
Similar to \cite{liu2019end},
we use the Deeplab-ResNet \cite{chen2017deeplab} with atrous convolutions as encoders and the ASPP architecture \cite{chen2017deeplab} as decoders.
We adopt the ResNet-50 pretrained on ImageNet for the CityScapes and NYUv2 datasets to implement the the Deeplab-ResNet, and use the pretrained ResNet-18 on the larger PASCAL-Context and Taskonomy datasets for training efficiency. We use the cross-entropy loss for the semantic segmentation, human parts segmentation and saliency estimation tasks, the cosine similarity loss for the surface normal prediction task, and the $L_1$ loss for other tasks. For the DSMTL-AL method, a module is defined as a layer in the Deeplab-ResNet model and the branch position is set before and after each layer. Therefore, we have five layers (including conv1) and have six branch positions. For optimization, we use the Adam method \cite{kingma2014adam} with the learning rate as $10^{-4}$.
All the experiments are conducted on Tesla V100 GPUs.


\subsection{Experimental Results}

Tables \ref{tab:cityscapes}-\ref{tab:taskonomy} show the performance of all the models in comparison on different datasets.
On the CityScapes dataset, the proposed DSMTL-IL, DSMTL-JL, DSMTL-AL and some baseline methods (i.e., Cross-stitch, MTAN, RotoGrad, BMTAS, TSN, and NDDR-CNN) perform no worse than the STL model on each metric and hence under such setting, they achieve \textit{safe multi-task learning} (i.e., ${\eta} = 100$). In addition, the proposed DSMTL-AL model achieves the best $\Delta_I$, which demonstrates its effectiveness.
On the NYUv2 and PASCAL-Context datasets, none of the baselines can achieve \textit{safe multi-task learning}, while the proposed methods (i.e., DSMTL-IL, DSMTL-JL, and DSMTL-AL) can achieve that, which again shows the effectiveness of the proposed methods.
On the Taskonomy dataset, only the cross-stitch network, LTB and the proposed methods can achieve \textit{safe multi-task learning} and among them, the proposed DSMTL-JL method performs the best in terms of $\Delta_I$. According to results shown in Table \ref{tab:taskonomy}, we can see that the AFA method achieves the best performance on the keypoint detection and edge detection tasks for the Taskonomy dataset, but it does not achieve \textit{safe multi-task learning}, which is the focus of the proposed methods. Moreover, the number of parameters in the AFA model is 2.66 times over that of the DSMTL-IL and DSMTL-JL models, which may explain the improvement of the AFA model on some tasks.


For the proposed three methods, all of them achieve safeness on the four datasets, which demonstrates their effectiveness.
The proposed DSMTL-JL method performs better than the DSMTL-IL method on the NYUv2, PASCAL-Context, and Taskonomy datasets in terms of $\Delta_I$.
One reason is that the DSMTL-JL model can learn a better model by optimizing all the model parameters together, while the DSMTL-IL model adopts a two-stage optimization strategy.
Compared with those two models, the DSMTL-AL method can achieve a better trade-off between the performance and the model size as it performs comparable or even better than the better one in DSMTL-IL and DSMTL-JL and its model size is much smaller than those in DSMTL-IL and DSMTL-JL, which matches the design goal of the DSMTL-AL method.

\begin{table*}[!htbp]
\centering
\caption{$\{\alpha_t\}$ learned in the DSMTL models as well as branch points and $\{w_t\}$ learned by the DSMTL-AL method on four CV datasets. `SS' stands for the semantic segmentation task, `DE' denotes the depth estimation task, `SNP' is for the surface normal prediction task, `HPS' corresponds to the human parts segmentation task, `SE' stands for the saliency estimation task, `KD' stands for the keypoint detection task, and `ED' denotes the edge detection task.}
\vskip -0.1in
\resizebox{\linewidth}{!}{
\begin{tabular}{l|cc|ccc|cccc|ccccc}
\toprule \multirow{2}{*}{{Method}} & \multicolumn{2}{c|}{{CityScapes}} & \multicolumn{3}{c|}{{NYUv2}} & \multicolumn{4}{c|}{{PASCAL-Context}} & \multicolumn{5}{c}{{Taskonomy}} \\
\cmidrule(lr){2-3}
\cmidrule(lr){4-6}
\cmidrule(lr){7-10}
\cmidrule(lr){11-15}
& SS & DE & SS & DE & SNP & SS & HPS & SE & SNP & SE & DE & KD & ED & SNP \\
\midrule
DSMTL-IL & $0.5539$ & $0.4470$ & $0.5624$ & $0.5083$ & $0.4745$ & $0.3823$ & $0.3899$ & $0.4426$ & $0.3100$ & $0.4256$ & $0.3162$ & $0.3143$ & $0.3768$ & $0.3056$ \\
DSMTL-JL & $0.5002$ & $0.4960$ & $0.4383$ & $0.5188$ & $\boxed{0.1997}$ & $0.4739$ & $0.5529$ & $0.3701$ & $\boxed{0.2304}$ & $0.4886$ & $0.4565$ & $0.4504$ & $0.4578$ & $\boxed{0.2584}$ \\
\midrule
DSMTL-AL
 & $-$ & $0.3675$ & $-$ & $-$ & $0.3931$ & $-$ & $0.5651$ & $0.5545$ & $0.3952$ & $-$ & $-$ & $0.3565$ & $-$ & $0.3438$ \\
- branch point & $p=6$ & $p=4$ & $p=6$ & $p=6$ & $p=4$ & $p=6$ & $p=2$ & $p=1$ & $p=4$ & $p=6$ & $p=6$ & $p=4$ & $p=6$ & $p=1$ \\
- learned $w_t$ & $0.3274$ & $0.6726$ & $0.0283$ & $0.0568$ & $0.9149$ & $0.0574$ & $0.0649$ & $0.8032$ & $0.0745$ & $0.0012$ & $0.0033$ & $0.4932$ & $0.4932$ & $0.0091$ \\
\bottomrule
\end{tabular}
}
\label{tab:alpha}
\end{table*}

\begin{figure}[!htbp]
\centering
\includegraphics[width=1\linewidth]{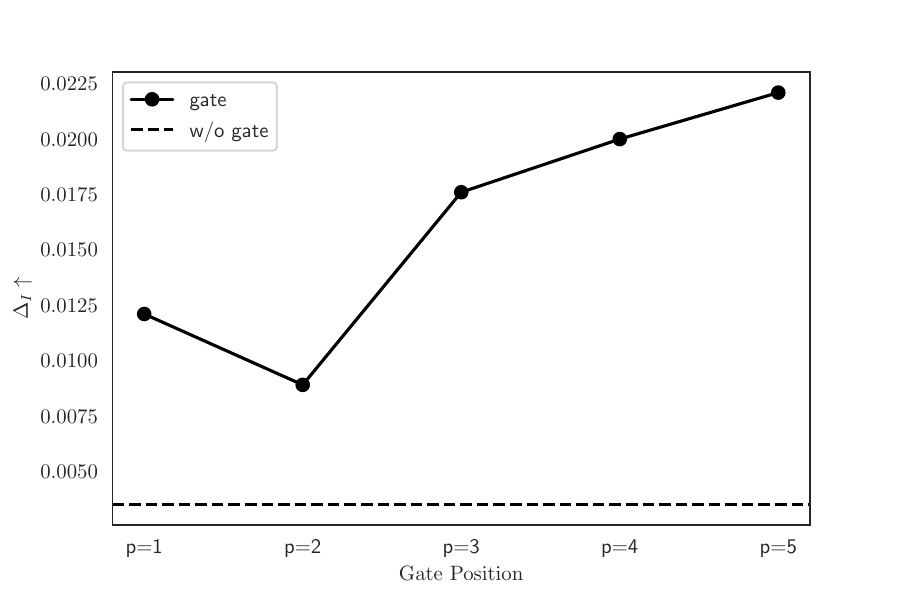}
\caption{The performance of the DSMTL-JL model on the NYUv2 dataset when varying the position of the gates, where $p$ represents the position of the gate.
}
\label{fig:gate_position}
\end{figure}

\subsection{Analysis on the Position of Gate}

In this section, we study how the position of the gate affects the performance of the proposed models and we use the DSMTL-JL model as an example.
As there are $P-1$ possible positions for each gate by excluding the position before the first layer, we try each of them to see which one is the best in terms of the performance. To avoid the exponential complexity, we assume that gate positions of different tasks are the same.
Specifically, we put a gate $g_t(\cdot,p)$ after the $p$-th module of the private encoder in task $t$ and the entire encoder for task $t$ is formulated as
\begin{equation*}
g_t(\mathbf{x},p)=
f_t(\alpha^p_t f_S(\mathbf{x},p) + (1-\alpha^p_t) f_t(\mathbf{x},1,p),p+1,P-1),
\end{equation*}
where as defined in Section \ref{sec:DSMTL_AL}, $f_S(\cdot,p)$ denotes the output of the $p$-th module in $f_S$ and $f_t(\cdot,p,q)$ denotes the output of the $q$-th module in $f_t$ starting from its $p$-th module.

\begin{table}[h]
\caption{Ablation study of the DSMTL models on the NYUv2 dataset.
}
\centering
\resizebox{0.8\linewidth}{!}{
\begin{tabular}{lccc}
\toprule
Method & $\Delta_I\uparrow$ &${\eta}\uparrow$ \\
\midrule
DSMTL-IL &$+0.0067$ & 100 \\
-w/o learnable gate &$+0.0034$ &77  \\
\midrule
DSMTL-JL &$+0.0221$ & 100 \\
-w/o learnable gate &$+0.0035$ &83 \\
\midrule
DSMTL-AL &$+0.0281$ & 100 \\
-w/o learnable gate &$+0.0220$ &100 \\
-w/o learnable task weighting &$+0.0166$ &80 \\
\bottomrule
\end{tabular}
}
\vskip -0.15in
\label{tab:abs}
\end{table}

According to Figure \ref{fig:gate_position}, when changing the gate position from $p=2$ to $p=5$, the performance of the DMTL-JL model becomes better, and $p=5$ gives the best performance, which justifies the choice of the gate position in the DSMTL-IL and DSMTL-JL models and also inspires the design of the final encoder in the DSMTL-AL method as defined in Eq.~\eqref{DSMTL_AL_final_encoder}.

\subsection{Analysis on Learned Task Relevance}

We show the learned $\{\alpha_t\}$ of the proposed methods in Table \ref{tab:alpha}. According to the results, we can see that some $\alpha_t$'s are closed to $0.5$, which implies that in those cases, the public encoder and the private encoder are both important to the corresponding tasks. Thus, only using the public encoder (i.e., HPS) and only using the private encoder (i.e., STL) cannot achieve good performance, while the proposed models can take the advantages of these two methods to achieve better performance in most cases.
Moreover, some of the learned $\alpha_t$'s have relatively small values (i.e., values smaller than $0.3$), which are shown in box. These small values indicate that for the surface normal prediction task on the NYUv2, PASCAL-Context, and Taskonomy datasets, the public encoder is relatively unimportant, and this is consistent with the architecture learned by the DSMTL-AL method, where the surface normal prediction task branches out at the first several public modules and switches to the private modules. This may imply that the surface normal prediction task is not strongly related to other tasks on these datasets, which aligns with the task relationship founded in \cite{sun2021task}. On the other hand, this observation may explain why HPS performs much worse than STL and why the proposed  methods have good performance on those datasets (refer to Tables \ref{tab:nyuv2}-\ref{tab:taskonomy}).
For the DSMTL-AL method, we can see that in each dataset, at least one task choose to use the entire shared encoder, which corresponds to the choice of the branch point $p=6$ and hence indicates that there is no need to learn the corresponding $\alpha_t$ defined in Eq.~\eqref{DSMTL_AL_final_encoder}, and hence including the public encoder in the design of the final encoder defined in Eq.~\eqref{DSMTL_AL_final_encoder} for the DSMTL-AL method will not increase the model size.

As the loss scales and converge speed of different tasks vary a lot, using identical loss weighting (i.e., $w_t=\frac{1}{m}$ for $t=1,\ldots,m$) may lead to suboptimal performance.
The learned $w_t$'s in DSMTL-AL at the bottom of Table \ref{tab:alpha} show that loss weights are uneven. For example, the surface normal prediction task has a larger loss weight than the other two tasks in the NYUv2 dataset, while this task has a smaller loss weight than some other tasks in the PASCAL-Context and Taskonomy datasets, which indicates that the proposed DSMTL-AL method could learn adaptive loss weighting strategies in different datasets.

\subsection{Ablation Study}
In Table \ref{tab:abs}, we provide the ablation study for the proposed DSMTL models.
For ``w/o learnable gate'', we replace the learnable gates in Eq.~(\ref{eq:gate_fun}) or \eqref{DSMTL_AL_final_encoder} with simply compute the average of outputs of both public and private encoders. Compared with the original DSMTL methods, the performance of those variants degrades in terms of $\Delta_I$, which verifies the usefulness of the learnable gates. The safeness coefficient decreases in both DSMTL-IL and DSMTL-JL cases, which indicates that learnable gates are a key ingredient for the DSMTL-IL and DSMTL-JL models to achieve the safeness. The safeness coefficient still keeps as 100 in the variant of the DSMTL-AL method, which implies that the DSMTL-AL method can learn a reliable  architecture to achieve the safeness.

For ``w/o learning task weighting'', we replace the learned task weights in the DSMTL-AL method with identical loss weights during the retraining process. This variant without the learnable task weighting has inferior performance to the DSMTL-AL method, which indicates that learning task weighting in the DSMTL-AL method can not only reduce tedious costs to tune loss weights but also improve the performance of the DSMTL-AL method.

\subsection{Combination and Comparison with Loss Weighting Strategies}

The loss weighting scheme adopted in the DSMTL-IL and DSMTL-JL methods is the commonly used Equally Weighting (EW) strategy (i.e., all the loss weights are equal to $\frac{1}{m}$ in problems \eqref{eqa: obj_al} and \eqref{eqa: obj}). The loss weighting scheme adopted in the DSMTL-AL method is a Learnable Weighting (LW) strategy as in Eq.~(\ref{eq:smtlba_lw}). In this section, we show that the proposed DSMTL models could be combined with some loss weighting methods in MTL, including Uncertainty Weights (UW) \cite{kendall2018multi}, Dynamic Weight Average (DWA) \cite{liu2019end}, Geometric Loss Strategy (GLS) \cite{chennupati2019multinet++},
PCGrad \cite{PCGrad}, and CAGrad \cite{CAGrad}.

According to experimental results shown in Table \ref{tab:nyuv2_weighting}, the combination of the DSMTL-IL method and other methods than the EW strategy has inferior performance and a lower safeness coefficient $\eta$, which verifies the usefulness of the EW strategy adopted in the DSMTL-IL method. Differently, the performance of the DSMTL-JL method could be improved in terms of $\Delta_I$ when combining with some loss weighting strategies (i.e., GLS and CAGrad), and all the combinations have the largest safeness coefficients.
One reason for the aforementioned difference between the DSMTL-IL and DSMTL-JL methods is that the DSMTL-IL method fixes all parameters of the private encoders and decoders in the second stage of the training process, making the corresponding parameters not fully updated and therefore leading to the inferior performance.
For the DSMTL-AL method, replacing LW with \mbox{PCGrad} can further improve the performance while other loss weighting strategies degrade the performance.

In terms of computational cost or the training speedup over the STL, the PCGrad and CAGrad methods have large computational overhead since they need to project huge-dimensional gradients of different tasks on each training step. The UW, DWA and GLS methods have relatively small computational overhead but with limited performance improvement.
The LW strategy in DSMTL-AL has no computational overhead during the retraining process and have competitive performance compared with various loss weighting strategy, which demonstrate the effectiveness of the LW strategy.

\begin{table}[!htb]
\centering
\caption{Combining DSMTL models with various loss weighting strategies on the NYUv2 validation dataset.}
\vskip -0.1in
\resizebox{\linewidth}{!}{
\begin{tabular}{lcccc}
\toprule
Architecture & Methods & Train Speedup $\uparrow$ & $\Delta_I\uparrow$ & ${\eta}\uparrow$\\
\midrule
STL &- &1.0x &0 &100 \\
\midrule
\multirow{6}{*}{HPS}
&EW &1.69x &+0.0001 &67 \\
&UW &1.66x &-0.0040 &67 \\
&DWA &1.68x &+0.0029 &67 \\
&GLS &1.68x &\textbf{+0.0158} &67 \\
&PCGrad &0.78x &+0.0037 &67 \\
&CAGrad &0.61x &+0.0023 &67 \\
\midrule
\multirow{6}{*}{DSMTL-IL}
&EW &1.18x &\textbf{+0.0067} &100 \\
&UW &1.15x &+0.0035 &53 \\
&DWA &1.17x &+0.0038 &60 \\
&GLS &1.16x &+0.0041 &67 \\
&PCGrad &0.43x &-0.0397 &0 \\
&CAGrad &0.36x &-0.0399 &0 \\
\midrule
\multirow{6}{*}{DSMTL-JL}
&EW &0.89x &+0.0221 &100 \\
&UW &0.83x &+0.0134 &100 \\
&DWA &0.84x &+0.0191 &100 \\
&GLS &0.88x &\textbf{+0.0269} &100 \\
&PCGrad &0.34x &+0.0164 &100 \\
&CAGrad &0.30x &+0.0237 &100 \\
\midrule
\multirow{7}{*}{DSMTL-AL}
&LW &1.18x &+0.0281 &100 \\
&EW &1.18x &+0.0166 &100 \\
&UW &1.14x &+0.0163 &100 \\
&DWA &1.16x  &+0.0204 &93 \\
&GLS &1.17x  &+0.0203 &80 \\
&PCGrad &0.42x &\textbf{+0.0299} &100 \\
&CAGrad &0.38x &+0.0243 &100 \\
\bottomrule
\end{tabular}
}
\vskip -0.1in
\label{tab:nyuv2_weighting}
\end{table}

\section{Conclusion}

In this paper, we formally define the problem of \textit{safe multi-task learning}, and propose a simple and effective DSMTL method that can learn to combine the shared and task-specific representations. We theoretically analyze the proposed models and prove that the proposed DSMTL-IL and DSMTL-JL methods are guarantee to achieve some versions of \textit{safe multi-task learning}.
To solve the scalability issue of the proposed DSMTL-IL and DSMTL-JL methods, we further propose the DMSTL-AL method to learn a compact architecture via techniques in neural architecture search. Extensive experiments demonstrate the effectiveness of the proposed methods.
In the future work, we are interested in generalizing the DSMTL methods to other learning problems.

\section*{Acknowledgements}

This work is supported by NSFC key grant under grant no. 62136005, NSFC general grant under grant no. 62076118, and Shenzhen fundamental research program JCYJ20210324105000003.

\bibliographystyle{plain}
\bibliography{DSMTL}
\appendices

\section{Proofs}

In this section, we provide proofs for all the theorems.

\subsection{Generalization Bound for Problem \eqref{eqa: obj}}

To help analyze the generalization bound of the DSMTL method, we first introduce a useful theorem in terms of Gaussian averages \cite{bartlett2002rademacher,maurer2016benefit}.
\begin{theorem}\label{thmB:1}
Let $\mathcal{G}$ be a class of functions $\vartheta: \mathcal{X} \to [0,1]^\top$, and $\mu_1,...,\mu_m$ be the probability measure on $\mathcal{X}$ with $\bar{\mathbf{X}} = (\mathbf{X}_1,...,\mathbf{X}_m) \sim \prod_{t=1}^m(\mu_t)^n$, where $\mathbf{X}_t= (\mathbf{x}_{t}^{1},\ldots,\mathbf{x}_{t}^{n})$. Let  and $\gamma$ be a vector of independent standard normal variables and $Z$ be the random set $\{(\vartheta_t(\mathbf{x}_{t}^i)): \vartheta \in \mathcal{G} \}$ where $\vartheta_t$ are functions chosen from hypothesis class $\mathcal{G}$. Then for all $\vartheta\in \mathcal{G}$, with probability at least $1-\delta$, we have
{\small
\begin{equation*}
\frac{1}{m} \sum_t \left(\mathbb{E}_{\mathbf{x}\sim \mu_t}[\vartheta_t(\mathbf{x})] - \frac{1}{n} \sum_{i}\vartheta_t(\mathbf{x}_{t}^i)   \right)
    \le \frac{\sqrt{2\pi}G(Z)}{mn} + \sqrt{\frac{9\ln \frac{2}{\delta}}{2mn}},
\end{equation*}
}\noindent
where $G(Z) = \mathbb{E}[\sup_{z\in Z}\left< \gamma,z \right>]$ is the Gaussian average of the random set $Z$.
\end{theorem}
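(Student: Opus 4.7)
My plan is to prove Theorem \ref{thmB:1} via the classical three-ingredient recipe for uniform deviation bounds over a product sample: bounded-differences concentration, symmetrization to a Rademacher process, and the Gaussian-dominates-Rademacher comparison inequality. Define
\begin{equation*}
\Phi(\bar{\mathbf{X}}) := \sup_{\Psi \in \mathcal{G}} \frac{1}{m} \sum_{t=1}^m \left( \mathbb{E}_{\mathbf{x} \sim \mu_t}[\Psi_t(\mathbf{x})] - \frac{1}{n} \sum_{i=1}^n \Psi_t(\mathbf{x}_t^i) \right).
\end{equation*}
Since each coordinate of $\Psi_t(\mathbf{x})$ lies in $[0,1]$, modifying a single sample $\mathbf{x}_t^i$ changes $\Phi$ by at most $1/(mn)$. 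McDiarmid's bounded-differences inequality then gives, with probability at least $1-\delta/2$, $\Phi(\bar{\mathbf{X}}) \le \mathbb{E}[\Phi(\bar{\mathbf{X}})] + \sqrt{\ln(2/\delta)/(2mn)}$.

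Next I bound the expectation $\mathbb{E}[\Phi]$ by symmetrization. Introducing an independent ghost sample $\bar{\mathbf{X}}' \sim \prod_t (\mu_t)^n$ and applying Jensen's inequality in the usual way yields
\begin{equation*}
\mathbb{E}[\Phi(\bar{\mathbf{X}})] \;\le\; \frac{2}{mn} \, \mathbb{E}_{\bar{\mathbf{X}}, \sigma}\!\left[\sup_{\Psi \in \mathcal{G}} \sum_{t,i} \sigma_{t,i}\, \Psi_t(\mathbf{x}_t^i)\right],
\end{equation*}
where the $\sigma_{t,i}$ are i.i.d.\ Rademacher signs. I then invoke the classical comparison inequality between Rademacher and Gaussian averages, $\mathbb{E}_\sigma[\sup_{z \in Z}\langle \sigma, z\rangle] \le \sqrt{\pi/2}\, \mathbb{E}_\gamma[\sup_{z \in Z}\langle \gamma, z\rangle]$ (Ledoux--Talagrand), applied conditionally on $\bar{\mathbf{X}}$ to the random set $Z$. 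Multiplying out the constants $2 \cdot \sqrt{\pi/2} = \sqrt{2\pi}$ gives $\mathbb{E}[\Phi] \le \sqrt{2\pi}\, \mathbb{E}[G(Z)]/(mn)$.

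Finally, since the statement bounds the deviation in terms of the empirical Gaussian average $G(Z)$ (which itself is a random quantity depending on $\bar{\mathbf{X}}$), I apply McDiarmid a second time to $G(Z)$: replacing one sample $\mathbf{x}_t^i$ perturbs $G(Z)$ by at most a constant multiple of $1/(mn)$, so with probability at least $1-\delta/2$ the expectation $\mathbb{E}[G(Z)]$ is upper bounded by $G(Z) + O(\sqrt{\ln(2/\delta)/(mn)})$. A union bound over the two concentration events combines everything into the stated inequality, where the constant $\sqrt{9/2}$ in front of $\sqrt{\ln(2/\delta)/(mn)}$ absorbs both bounded-differences terms and the $\sqrt{2\pi}$-rescaled concentration for $G(Z)$.

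The main obstacle is being careful about the bounded-differences constant in the second McDiarmid step: the Gaussian average $G(Z)$ is not a straightforward average of bounded quantities, so I must show that swapping $\mathbf{x}_t^i$ changes the supremum of $\langle \gamma, z\rangle$ over $z \in Z$ by only $O(1/(mn))$ in absolute value in expectation (this uses the fact that the coordinate of $z$ indexed by $(t,i)$ lies in $[0,1]^T$ and only $T$ Gaussian coordinates are affected). The remaining ingredients — ghost-sample symmetrization respecting the product structure across $t$ and $i$, and the Rademacher-to-Gaussian comparison — are standard but need to be written out in the right order to make the constant $\sqrt{2\pi}$ appear in front of $G(Z)$ rather than $\mathbb{E}[G(Z)]$.
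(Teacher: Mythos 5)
Your proposal targets Theorem \ref{thmB:1}, which the paper does not prove at all: it is imported verbatim from the cited literature \citep{bartlett2002rademacher,maurer2016benefit}, so there is no in-paper proof to compare against. Your outline is, in fact, essentially the standard proof given in \citep{maurer2016benefit}: bounded differences applied to the supremum of the deviation, ghost-sample symmetrization to a Rademacher process, the Ledoux--Talagrand comparison $\mathbb{E}_\sigma \sup_{z\in Z} \langle \sigma, z\rangle \le \sqrt{\pi/2}\,\mathbb{E}_\gamma \sup_{z\in Z}\langle\gamma,z\rangle$ to produce the factor $\sqrt{2\pi}$, and a second concentration step to replace $\mathbb{E}[G(Z)]$ by the data-dependent $G(Z)$. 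The constant accounting also works out exactly as you suspect: the first McDiarmid step contributes $\sqrt{\ln(2/\delta)/(2mn)}$, and since swapping one sample changes $\sqrt{2\pi}\,G(Z)/(mn)$ by at most $\sqrt{2\pi}\cdot\mathbb{E}|\gamma_{t,i}|/(mn)=2/(mn)$, the second step contributes $2\sqrt{\ln(2/\delta)/(2mn)}$; summing and using a union bound gives $(1+2)\sqrt{\ln(2/\delta)/(2mn)}=\sqrt{9\ln(2/\delta)/(2mn)}$. One small correction: since $\Psi_t(\mathbf{x})$ is the scalar $t$-th coordinate of $\Psi(\mathbf{x})\in[0,1]^T$, the set $Z$ sits in $[0,1]^{mn}$ and swapping $\mathbf{x}_t^i$ perturbs exactly one Gaussian coordinate rather than $T$ of them, which only makes your bounded-differences estimate for $G(Z)$ easier than you anticipated. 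With that fixed, the plan is sound and complete in outline.
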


Based on Theorem \ref{thmB:1}, we first establish the following uniform bound for problem \eqref{eqa: obj}.
\begin{theorem}\label{thmB:2}
Suppose Assumption \ref{assumption:1} is satisfied. Then for $(\bar{\mathbf{X}},\bar{\mathbf{Y}}) \sim \prod_{t=1}^m(\mu_t)^n$, with probability at least $1-\delta$, we have
\begin{equation}\label{thmB:1eq}
\begin{aligned}
\mathcal{E}- &\frac{1}{mn}\sum_{t,i} \mathcal{L}_t(\mathbf{y}^i_t,h_t( g_t(f_S(\mathbf{x}^i_t),f_t(\mathbf{x}^i_t)))) \\&
\le  \frac{C_1 G(\mathcal{F}(\bar{\mathbf{X}}))}{mn}
+ \frac{C_2Q}{\sqrt{n}}  + \sqrt{\frac{9\ln \frac{2}{\delta}}{2mn}},
\end{aligned}
\end{equation}
where $C_1$, $C_2$ are two constants, the quantity $Q$ is defined as
\begin{equation*}
Q = \sup_{z\not = \tilde{z} \in \mathbb{R}^{nq}} \frac{1}{\|z-\tilde{z}\|} \mathbb{E} \sup_{h_t\in \mathcal{H}}\sum_{i=1}^n  \gamma_i (h_t(z_{i}) - h_t(\tilde{z}_{i})),
\end{equation*}
and $\gamma$ is a vector of independent standard normal variables.
\end{theorem}

\begin{proof}
According to Theorem \ref{thmB:1}, for $h_t\in\mathcal{H}$, $f_t,f_S \in \mathcal{F}$, and $\alpha_t \in \mathcal{M}$, with probability at least $1-\delta$, we have
\begin{align*}
    \mathcal{E}- &\frac{1}{mn}\sum_{t,i} \mathcal{L}_t(\mathbf{y}^i_t,h_t( g_t(f_S(\mathbf{x}^i_t),f_t(\mathbf{x}^i_t)))) \\&
\le  \frac{\sqrt{2\pi} G(\mathcal{S})}{mn}
 + \sqrt{\frac{9\ln \frac{2}{\delta}}{2mn}},
\end{align*}
where $S = \{\mathcal{L}_t(\mathbf{y}^i_t,h_t( g_t(f_S(\mathbf{x}^i_t),f_t(\mathbf{x}^i_t))))\} \subseteq \mathbb{R}^{mn}$ and $G(S)$ represents the Gaussian average of the set $S$. Then by using the Lipschitz property of $\mathcal{L}_t$ and Slepian's Lemma \cite{ledoux2013probability}, we have $G(S)\le G(S')$, where $S' = \{  h_t( g_t(f_S(\mathbf{x}^i_t),f_t(\mathbf{x}^i_t))) : h_t\in \mathcal{H}, \alpha_t \in \mathcal{M}, f_t,f_S \in \mathcal{F}\}$.

Note that the input data $\bar{\mathbf{X}}\in \mathcal{X}^{mn}$ and the encoders $f_1,\ldots,f_m,f_S: \mathcal{X}\to \mathbb{R}^{q}$ are mapping functions chosen from $\mathcal{F}$, the random set $\mathcal{K}(\bar{\mathbf{X}}) \subseteq \mathbb{R}^{mnq}$ is defined as  $\mathcal{K}(\bar{\mathbf{X}}) = \{ g_t(f_S(\mathbf{x}^i_t),f_t(\mathbf{x}^i_t)): \alpha_t \in \mathcal{M}, f_t,f_S \in \mathcal{F}\}$. We define a class of functions $\mathcal{H}' = \{z \in \mathbb{R}^{mnq}\mapsto h_t(z_t^i): h_t\in \mathcal{H}\}$. Therefore, we have  $\mathcal{S}'(\bar{\mathbf{X}}) = \mathcal{H}'(\mathcal{K}(\bar{\mathbf{X}}))$.

By using Theorem 2 in \cite{maurer2016chain}, we obtain
\begin{align*}
    G(S') \le  c_1 & L(\mathcal{H}')G(\mathcal{K}(\bar{\mathbf{X}})) +c_2D(\mathcal{K}(\bar{\mathbf{X}}))Q(\mathcal{H}') \\
    &+ \min_{z\in \mathcal{K}(\bar{\mathbf{X}})}G(\mathcal{H}'(z)),
\end{align*}
where $c_1$ and $c_2$ are two constants, $L(\mathcal{H}')$ denotes the Lipschitz constant of the functions in $\mathcal{H}'$, $D(\mathcal{K}(\bar{\mathbf{X}})) = 2 \sup_{\varphi \in \mathcal{K}}\|\varphi(\bar{\mathbf{X}})\|$ denotes the Euclidean diameter of the set $\mathcal{F}(\bar{\mathbf{X}})$, and
{\small
\begin{equation*}
    Q(\mathcal{H}') =\sup_{z\not = \tilde{z} \in \mathbb{R}^{mnq}} \frac{1}{\|z-\tilde{z}\|} \mathbb{E}\sup_{\psi\in\mathcal{H}'} \left< \gamma, \psi(z) - \psi(\tilde{z})\right>.
\end{equation*}
}\noindent

Let $z,\tilde{z} \in \mathbb{R}^{mnq}$, where $z = (z_{t}^i)$, $\tilde{z} = (\tilde{z}_{t}^i)$ and $z_{t}^i, \tilde{z}_{t}^i \in \mathbb{R}^{q}$. Then for any functions $\psi \in \mathcal{H}'$,  we have
{\small
\begin{align*}
    & \mathbb{E}\sup_{\psi\in\mathcal{H}'} \left< \gamma, \psi(z) - \psi(\tilde{z})\right>  \\
    = & \sum_{t=1}^m\mathbb{E} \sup_{h\in \mathcal{H}} \sum_{i=1}^n \gamma_i (h(z_t^i) - h(\tilde{z}_t^i)) \\
    \le & \sqrt{m} \left(\sum_{t=1}^m (  \mathbb{E} \sup_{h\in \mathcal{H}} \sum_{i=1}^n \gamma_i (h(z_t^i) - h(\tilde{z}_t^i))   )^{2}\right)^{1/2} \\
    \le & \sqrt{m} \left(\sum_{t=1}^m Q^2\sum_{i=1}^n \|z_t^i - \tilde{z}_t^i\|^2 \right)^{1/2} \nonumber \\
    = & \sqrt{m} Q \|z-\tilde{z}\|,
\end{align*}
}\noindent
 where the first inequality is due to the Cauchy-Schwarz inequality and the second inequality is due to the inequality $\|z-\tilde{z}\| \le \sup \|z-\tilde{z}\|$ holds for all $z\not = \tilde{z} \in \mathbb{R}^{nq}$.
Therefore, $Q(\mathcal{H}') \le \sqrt{m} Q$. Moreover, suppose the functions in hypothesis classes $\mathcal{H}$ are $M$-Lipschitz continuous, we have
\begin{equation*}\label{sub:5}
    \| \psi(z)-\psi(\tilde{z}) \|^2  = \sum_{t,i}(h_t(z_t^i) - h_t(\tilde{z}_{t}^{i}))^2 \le  M^2 \|z -\tilde{z}\|^2
\end{equation*}
where the inequality is due to the Lipschitz property. So we obtain $L(\mathcal{H}') \le M$. Since $0 \in \mathcal{F}$, we have $0 \in \mathcal{K}(\bar{\mathbf{X}})$. Note that $h(0)=0$, and hence $\min_{z\in \mathcal{F}(\bar{\mathbf{X}})}G(\mathcal{H}'(z)) = 0$ by setting $z=0$. Therefore, we have
\begin{equation}\label{sub:2}
    G(S) \le c_1 M G(\mathcal{K}(\bar{\mathbf{X}})) +2 c_2 \sqrt{m}Q\sup_{\varphi \in \mathcal{K}}\|\varphi(\bar{\mathbf{X}})\|.
\end{equation}

Recall that the random set $\mathcal{F}(\bar{\mathbf{X}}) \subseteq \mathbb{R}^{mnq}$ is defined as  $\mathcal{F}(\bar{\mathbf{X}}) = \{ (f_t(\mathbf{x}^i_t)): f_t \in \mathcal{F}\}$. Denote the $j$th entry of the vectors $f_S(\mathbf{x}^i_t)$ and $f_t(\mathbf{x}^i_t)$ by $f_{S,j}(\mathbf{x}^i_t)$ and $f_{t,j}(\mathbf{x}^i_t)$, respectively, and let $\gamma_{j,t,i}$ be the corresponding independent standard normal variable. Then we have
\begin{align*}
    G(\mathcal{K}(\bar{\mathbf{X}}))&= \mathbb{E}\left[\sup_{f_t,f_S,\alpha_t} \sum_{j=1}^q\sum_{t,i}\gamma_{j,t,i}g_t(f_{S,j}(\mathbf{x}^i_t),f_{t,j}(\mathbf{x}^i_t))\Big| \mathbf{x}^i_t\right] \\
    & = \mathbb{E}\left[\sup_{f_t\in \mathcal{F}} \sum_{j=1}^q\sum_{t,i}\gamma_{j,t,i}f_{t,j}(\mathbf{x}^i_t))\Big| \mathbf{x}^i_t\right] = G(\mathcal{F}(\bar{\mathbf{X}})),
\end{align*}
where the first and third equality are due to the definition of the Gaussian average, and the second equality holds since $f_t$ and $f_S$ are chosen from the same class $\mathcal{F}$ and $\mathcal{F}$ is uniformly bounded.

Since the hypothesis classes $\mathcal{F}$ is uniformly bounded, suppose that $\|f(\mathbf{x}^i_t)\| \le R$ for all $f\in \mathcal{F}$ and we have
\begin{align*}
    \sup_{\varphi \in \mathcal{K}}\|\varphi(\bar{\mathbf{X}})\| &= \sqrt{mn}\sup_{f_t,f_S,\alpha_t}\max_{i,t} \| g_t(f_S(\mathbf{x}^i_t),f_t(\mathbf{x}^i_t))\| \\
    & \le \sqrt{mn} \sup_{f\in\mathcal{F}}\max_{i,t} \| f(\mathbf{x}^i_t)\| \le \sqrt{mn} R,
\end{align*}
where the first inequality holds since $f_t$ and $f_S$ are chosen from the same class $\mathcal{F}$, and the second inequality holds since $\|f(\mathbf{x}^i_t)\| \le R$. Therefore, we get
\begin{align*}
    \mathcal{E}- &\frac{1}{mn}\sum_{t,i} \mathcal{L}_t(\mathbf{y}^i_t,h_t( g_t(f_S(\mathbf{x}^i_t),f_t(\mathbf{x}^i_t)))) \\&
\le  \frac{c_1M \sqrt{2\pi} G(\mathcal{F}(\bar{\mathbf{X}}))}{mn}
+ \frac{2c_2R\sqrt{2\pi}Q}{\sqrt{n}}
 + \sqrt{\frac{9\ln \frac{2}{\delta}}{2mn}}.
\end{align*}
By setting $C_1 = c_1M \sqrt{2\pi}$ and $C_2 = 2c_2R\sqrt{2\pi}$, we reach the conclusion.
\end{proof}

\begin{remark}
According to the McDiarmid’s inequality \cite{mohri2018foundations}, the upper bound in Theorem \ref{thmB:1} also holds for the sum of $\vartheta_t(\mathbf{x}_{t}^i)$ minus its expectation. Thus following the same proof as above, we can verify that the following inequality holds under the same assumption in Theorem \ref{thmB:1}:
\begin{equation}\label{thmB:2eq}
\begin{aligned}
\frac{1}{mn}&\sum_{t,i} \mathcal{L}_t(\mathbf{y}^i_t,h_t( g_t(f_S(\mathbf{x}^i_t),f_t(\mathbf{x}^i_t)))) - \mathcal{E}\\&
\le  \frac{C_1 G(\mathcal{F}(\bar{\mathbf{X}}))}{mn}
+ \frac{C_2Q}{\sqrt{n}}  + \sqrt{\frac{9\ln \frac{2}{\delta}}{2mn}}.
\end{aligned}
\end{equation}
\end{remark}

\subsection{Proof of Theorem \ref{thm1}} \label{app:proof_theorem_1}
\begin{proof}
The STL model aims to solve the following optimization problem as
\begin{equation*}
    \min_{h^{\text{STL}}_t \in \mathcal{H},f^{\text{STL}}_t\in \mathcal{F}} \frac{1}{mn}\sum_{i=1}^n \sum_{t=1}^m \mathcal{L}_t(\mathbf{y}^i_t,h^{\text{STL}}_t( f^{\text{STL}}_t(\mathbf{x}_t^i))),
\end{equation*}
where its solution is denoted by $\{\hat{h}^{\text{STL}}_t\}$ and $\{\hat{f}^{\text{STL}}_t\}$. Therefore, the minimal empirical loss of the STL model on task $t$ is computed as
\begin{equation*}
    \hat{L}^{\text{STL}}_t = \frac{1}{n}\sum_{i=1}^n  \mathcal{L}_t(\mathbf{y}^i_t,\hat{h}^{\text{STL}}_t(\hat{f}^{\text{STL}}_t(\mathbf{x}_t^i))).
\end{equation*}
For the DSMTL-IL model, we set $\alpha_t = 0$ for all tasks in the first training stage, thus the corresponding objective function is the same as that of the STL model. Therefore, the solution of the first stage in the DSMTL-IL model, i.e., $\{\hat{h}_t\}$ and $\{\hat{f}_t\}$, satisfies $\hat{h}_t = \hat{h}^{\text{STL}}_t$ and $\hat{f}_t= \hat{f}^{\text{STL}}_t$. In the second training stage of the DSMTL-IL model, for any public encoder $f_S$ in task $t$, we have
\begin{align*}
    \min_{\alpha_t\in \mathcal{M}} & \frac{1}{n}\sum_{i=1}^n  \mathcal{L}_t(\mathbf{y}^i_t,\hat{h}_t( \alpha_tf_S(\mathbf{x}_t^i)+(1-\alpha_t)\hat{f}_t(\mathbf{x}_t^i))) \\
    & \le \frac{1}{n}\sum_{i=1}^n  \mathcal{L}_t(\mathbf{y}^i_t,\hat{h}^{\text{STL}}_t(\hat{f}^{\text{STL}}_t(\mathbf{x}_t^i))) = \hat{L}^{\text{STL}}_t.
\end{align*}
Therefore, $L^*_t \le \hat{L}^{\text{STL}}_t \le L^{\text{STL}}_t$ holds for all $1\le t \le m$. This finishes the proof.
\end{proof}

\subsection{Proof of Theorem \ref{thm2}} \label{app:proof_theorem_2}
\begin{proof}

According to Theorem \ref{thmB:2}, with $m$ tasks and data $\bar{\mathbf{X}}$, we have following inequality
\begin{equation}
\begin{aligned}
\frac{1}{m}\sum_{t=1}^m\mathcal{E}_t- &\frac{1}{mn}\sum_{t,i} \mathcal{L}_t(\mathbf{y}^i_t,h_t( g_t(f_S(\mathbf{x}^i_t),f_t(\mathbf{x}^i_t)))) \\&
\le  \frac{c_1 G(\mathcal{F}(\bar{\mathbf{X}}))}{mn}
+ \frac{c_2Q}{\sqrt{n}}  + \sqrt{\frac{9\ln \frac{2}{\delta}}{2mn}},
\end{aligned}
\end{equation}
holds with probability at least $1-\delta$. Thus, for one single task such as task $t$ and its corresponding data $\mathbf{X}_t$, setting the number of tasks in the above inequality to be $1$ gives
\begin{equation}
\begin{aligned}\label{thm2:proof2}
    \mathcal{E}_t - &\frac{1}{n}\sum_i \mathcal{L}_t(\mathbf{y}^i_t,h_t( g_t(f_S(\mathbf{x}^i_t),f_t(\mathbf{x}^i_t)))) \\&
    \le \frac{c_1 G(\mathcal{F}'(\mathbf{X}_t))}{n} +\frac{c_2Q}{\sqrt{n}} + \sqrt{\frac{9\ln{\frac{2}{\delta}}}{2n}}.
\end{aligned}
\end{equation}
By substituting the solution of problem \eqref{eqa: obj_al} into inequality \eqref{thm2:proof2}, we have following inequality as
\begin{equation}\label{proof2:eq1}
     \hat{\mathcal{E}}_t -L^*_t  \le \frac{c_1 G(\mathcal{F}'(\mathbf{X}_t))}{n} +\frac{ c_2Q}{\sqrt{n}} + \sqrt{\frac{9\ln{\frac{2}{\delta}}}{2n}}.
\end{equation}

Moreover, since the STL model can be considered as a special case of the DSMTL-IL model where $g_t$ adopts $g_t^0$ defined in Eq. \eqref{eq:stl_smtl}, substituting it into the inequality \eqref{thmB:2eq} gives
\begin{equation}\label{proof2:eq2}
 L^{\text{STL}} - \mathcal{E}^{\text{STL}}\le  \frac{c_1' G(\mathcal{F}(\bar{\mathbf{X}}))}{nm} + \frac{c_2'Q}{\sqrt{n}}  + \sqrt{\frac{9\ln \frac{2}{\delta}}{2mn}}.
\end{equation}
 Therefore, for the STL model in task $t$, setting $m$ to 1 in the above inequality gives
\begin{equation}\label{proof2:eq3}
     L_t^{\text{STL}} - \mathcal{E}^{\text{STL}}_t \le \frac{c_1' G(\mathcal{F}'(\mathbf{X}_t))}{n} +\frac{c_2'Q}{\sqrt{n}} + \sqrt{\frac{9\ln{\frac{2}{\delta}}}{2n}}.
\end{equation}

Add the inequalities \eqref{proof2:eq1} and \eqref{proof2:eq3} we get
\begin{equation*}
\hat{\mathcal{E}}_t - L^*_t + L_t^{\text{STL}} - \mathcal{E}^{\text{STL}}_t \le \frac{C_1 G(\mathcal{F}'(\mathbf{X}_t))}{n} +\frac{ C_2Q}{\sqrt{n}} + \sqrt{\frac{18\ln{\frac{2}{\delta}}}{n}},
\end{equation*}
where $C_1$ and $C_2$ are two constants and $\mathcal{F}'(\bar{\mathbf{X}}_t) = \{ f(\mathbf{x}^i_t): f \in \mathcal{F}\} \subseteq \mathbb{R}^{nq}$. According to Theorem \ref{thm1}, there exists a constant $\varepsilon_t \ge 0$ such that $\hat{L}^*_t + \varepsilon_t = L_t^{\text{STL}}$. Therefore, we have
\begin{equation*}
    \hat{\mathcal{E}}_t+\epsilon_t \le \mathcal{E}^{\text{STL}}_t + \frac{C_1 G(\mathcal{F}'(\mathbf{X}_t))}{n} +\frac{ C_2Q}{\sqrt{n}}+\sqrt{\frac{18\ln{\frac{2}{\delta}}}{n}},
\end{equation*}
which completes the proof.
\end{proof}

\subsection{Proof of Theorem \ref{thm3}} \label{app:proof_theorem_3}
\begin{proof}
It is easy to see that the STL model can be considered as a special case of the DSMTL-JL model when $\alpha_t = 0$ holds for $1\le t \le m$ and the HPS model is also a special case of the DSMTL-JL model when $\alpha_t = 1$ holds for $1\le t \le m$.

The empirical loss of the DSMTL model is formulated as
$$L = \frac{1}{mn}\sum_{i=1}^n \sum_{t=1}^m \mathcal{L}_t(\mathbf{y}^i_t,h_t( g_t(f_S(\mathbf{x}^i_t),f_t(\mathbf{x}^i_t)))).$$
We have $L^{\text{STL}} = L(\Theta)\mid_{\alpha_t = 0}$ and $L^{\text{HPS}} = L(\Theta)\mid_{\alpha_t = 1}$. Since $L^* \le L$, we can get $L^* \le \min\{L^{\text{STL}},L^{\text{HPS}}\}$ and hence we reach the conclusion.
\end{proof}

\subsection{Proof of Theorem \ref{thm4}} \label{app:proof_theorem_4}
\begin{proof}
Let $L^*$ be the optimal value of problem \eqref{eqa: obj}. Substituting the solution of problem \eqref{eqa: obj} into inequality \eqref{thmB:1eq} gives
\begin{equation}\label{proof4:eq1}
\hat{\mathcal{E}} -  L^*
\le  \frac{c_1 G(\mathcal{F}(\bar{\mathbf{X}}))}{mn}
+ \frac{c_2Q}{\sqrt{n}}  + \sqrt{\frac{9\ln \frac{2}{\delta}}{2mn}}.
\end{equation}
Based on inequalities \eqref{proof4:eq1} and \eqref{proof2:eq2}, with
probability $1-\delta$, we have
\begin{equation*}
\hat{\mathcal{E}} -L^* + L^{\text{STL}}- \mathcal{E}^{\text{STL}}  \le  \frac{C_1 G(\mathcal{F}(\bar{\mathbf{X}}))}{nm}
+ \frac{C_2Q}{\sqrt{n}}  + \sqrt{\frac{18\ln \frac{2}{\delta}}{mn}},
\end{equation*}
where $C_1$ and $C_2$ are two constants. According to Theorem \ref{thm3}, there exists a constant $\varepsilon \ge 0$ such that $L^* + \varepsilon = L^{\text{STL}}$. Therefore, we have
\begin{equation*}
    \hat{\mathcal{E}}+\epsilon \le \mathcal{E}^{\text{STL}} + \frac{C_1 G(\mathcal{F}(\bar{\mathbf{X}}))}{nm}
+ \frac{C_2Q}{\sqrt{n}} +\sqrt{\frac{18\ln{\frac{2}{\delta}}}{mn}},
\end{equation*}
where we reach the conclusion.
\end{proof}

\subsection{Proof of Theorem \ref{thm5}} \label{app:proof_theorem_5}
\begin{proof}
Let $\{f_t\}^*$, $f_S^*$, $\{h_t\}^*$, $\{g_t\}^*$ be the minimizer in $\mathcal{E}^*$. We can decompose $\hat{\mathcal{E}}- \mathcal{E}^*$ as
{\small
\begin{align*}
    \hat{\mathcal{E}}- \mathcal{E}^* =& \Big( \hat{\mathcal{E}} -  \frac{1}{mn}\sum_{t,i}\mathcal{L}_t(\mathbf{y}_t^i, h_t(g_t(f_S(\mathbf{x}_t^i), f_t(\mathbf{x}_t^i)) \Big) \\
    & +  \Big( \frac{1}{mn}\sum_{t,i} \mathcal{L}_t(\mathbf{y}_t^i, h_t(g_t(f_S(\mathbf{x}_t^i), f_t(\mathbf{x}_t^i))    \\
    & \qquad- \frac{1}{mn}\sum_{t,i}\mathcal{L}_t(\mathbf{y}_t^i, h^*_t(g^*_t(f^*_S(\mathbf{x}_t^i), f^*_t(\mathbf{x}_t^i))  \Big)  \\
    & + \Big( \frac{1}{mn}\sum_{t,i}\mathcal{L}_t(\mathbf{y}_t^i, h^*_t(g^*_t(f^*_S(\mathbf{x}_t^i), f^*_t(\mathbf{x}_t^i)) -\mathcal{E}^* \Big) \nonumber ,
\end{align*}
}\noindent
where the first term can be bounded by substituting inequality \eqref{thmB:1eq} and the last term can be regarded as $mn$ random variables $\mathcal{L}_t(\mathbf{y}_t^i, h^*_t(g^*_t(f^*_S(\mathbf{x}_t^i), f^*_t(\mathbf{x}_t^i))$ with values in $[0,1]$. By using Hoeffding's inequality, with probability at least $1-\delta$, we have
\begin{equation*}
    \frac{1}{mn}\sum_{ti}\mathcal{L}_t(\mathbf{y}_t^i , h_t^*(\omega_t^{*T}\varphi^*(\mathbf{x}_t^i)))-\mathcal{E}^* \le \sqrt{\frac{\ln \frac{1}{\delta}}{2mn}}.
\end{equation*}The second term is non-positive due to the definition of minimizers. Therefore, we have

\begin{equation*}
\hat{\mathcal{E}}- \mathcal{E}^*  \le\frac{C_1 G(\mathcal{F}(\bar{\mathbf{X}}))}{mn}
+ \frac{C_2Q}{\sqrt{n}}+ \sqrt{\frac{8\ln\frac{4}{\delta}}{mn}},
\end{equation*}
where we reach the conclusion.
\end{proof}

\end{document}